\documentclass{article}

\usepackage{amsmath,amsfonts,bm}

\def\eqref#1{Eq.\,(\ref{#1})}

\def\1{\bm{1}}

\DeclareMathAlphabet{\mathsfit}{\encodingdefault}{\sfdefault}{m}{sl}
\SetMathAlphabet{\mathsfit}{bold}{\encodingdefault}{\sfdefault}{bx}{n}

\newcommand{\E}{\mathbb{E}}

\newcommand{\R}{\mathbb{R}}

\usepackage{microtype}
\usepackage{graphicx}
\usepackage{subcaption}
\usepackage{booktabs} %
\usepackage{enumitem}
\usepackage{array}
\usepackage[dvipsnames,svgnames]{xcolor}

\usepackage{hyperref}

\usepackage[accepted]{icml2026}

\usepackage{amsmath}
\usepackage{amssymb}
\usepackage{mathtools}
\usepackage{amsthm}
\usepackage{style}
\usepackage{natbib}
\usepackage{wrapfig} 
\usepackage{booktabs}
\usepackage{multirow}
\usepackage{array}
\usepackage[table]{xcolor}
\usepackage[capitalize,noabbrev]{cleveref}

\usepackage[textsize=tiny]{todonotes}

\icmltitlerunning{On the Power of Source Screening for Learning Shared Feature Extractors}

\begin{document}

\twocolumn[
  \icmltitle{On the Power of Source Screening for Learning Shared Feature Extractors}

  \icmlsetsymbol{equal}{*}

  \begin{icmlauthorlist}
    \icmlauthor{Leo Muxing Wang}{yyy}
    \icmlauthor{Connor Mclaughlin}{yyy}
    \icmlauthor{Lili Su}{yyy}
  \end{icmlauthorlist}

  \icmlaffiliation{yyy}{Northeastern University}

  \icmlcorrespondingauthor{Leo Muxing Wang}{wang.muxin@northeastern.edu}

  \icmlkeywords{Machine Learning, ICML}

  \vskip 0.3in
]

\printAffiliationsAndNotice{}  %

\begin{abstract}
Learning with shared representation is widely recognized as an effective way to separate commonalities from heterogeneity across various sources. Most existing work includes all related data sources via simultaneously training a common feature extractor and source-specific heads. It is well understood that data sources with low relevance or poor quality may hinder representation learning. We seek a deeper understanding of whether the inclusion of certain otherwise beneficial sources may inadvertently impair representation quality. For tractability, we develop the main theory in linear models and shared linear subspace, and further show that the same source-screening principle extends to nonlinear models. We find that source screening can play a central role in statistically optimal subspace estimation. Specifically, we show that, for a broad class of problem instances, training on a carefully selected subset of sources suffices to achieve minimax optimality, even when a substantial portion of data is discarded. Towards this, we introduce the notion of an {\em informative subpopulation} and develop algorithms, along with practical heuristics, to identify such subsets. We validate the effectiveness of our approach through theoretical analysis and empirical evaluations on both synthetic and real-world datasets. 
\end{abstract}

\section{Introduction}\label{sec: intro}
Training models from scratch is often inefficient in both data and computation, as it requires repeated relearning of similar low- and mid-level features.  
Recent progress in machine learning has been driven by the emergence of powerful general-purpose feature extractors that capture latent commonalities while separating heterogeneity 
\citep{bengio2013representation,lecun2015deep,caruana1997multitask,collins2021exploiting}. 
Specifically, via exploiting underlying relatedness, multi-task learning trains multiple tasks jointly, promoting information sharing across tasks and knowledge transfer to new ones \citep{caruana1997multitask,ando2005framework}.  
Foundation models, the powerhouse behind recent AI advances, are trained at scale on heterogeneous and multi-source data to encode general knowledge across domains \citep{bommasani2021opportunities}. 
Similarly, in federated learning, a parameter server learns a global model or shared feature extractor that enables client-specific model personalization \citep{fallah2020personalized,collins2022fedavg,mcmahan2017communication,kairouz2021advances,collins2021exploiting}. 
 
However, a rigorous understanding of how to obtain a general-purpose feature extractor remains underdeveloped, even in linear settings \citep{niu2024collaborative,crawshaw2020multi}. 
With heterogeneous data sources/clients,  learning effectiveness is not guaranteed. 
Negative transfer has long been observed to be a challenging empirical phenomenon %
\citep{zhang2022survey,yang2025precise} in traditional single-source or multi-source transfer learning settings, and was not well characterized until recently.  
In shared representation learning, negative transfer is more subtle.
Heuristic intuition suggests that data sources with low relevance or poor quality may impede representation learning. %
However, it is unclear whether relevance and data quality alone fully capture the mechanisms that govern negative transfer in shared representations. %
A fundamental question naturally arises: 

\underline{\em  Question:}
{\em 
Can the inclusion of otherwise beneficial sources inadvertently impair shared representation learning, and if so, which sources should be retained? 
}

Addressing this question in full generality is challenging given its breadth. 
To make progress, we first restrict our attention to linear models and study the problem of shared linear subspace learning, and then extend the obtained results to nonlinear models.  

To obtain insights beyond trivial or degenerate cases, we further focus on \underline{\em regression problems}. We consider a challenging regime in which all sources would traditionally be considered ``good," in the sense that individual sources exhibit comparable relevance and quality with respect to the underlying common structure. 
 
Implicitly assuming no ``adversarial data sources'', most existing work on shared representation training incorporates all available related data sources/clients via minimizing the (weighted) average loss:  
\vspace{-0.3em}
\begin{align}
\label{eq: objective rewritten}
\min_{\phi \in \Phi} \frac{1}{M}\sum_{i=1}^M \min_{h_i\in \calH} F_i(h_i\circ \phi),
\end{align}  
where $M$ is the number of sources/clients, $F_i$ is the model prediction loss evaluated on source/client $i$, 
$\theta_i = (h_i\circ \phi)$ is the model that is decomposed into a shared feature representation $\phi$ and the source/client-specific head $h_i$ \citep{pan2009survey,bengio2013representation,finn2017model,thaker2023leveraging,fallah2020personalized,collins2021exploiting}, $\Phi$ is the representation class and $\mathcal H$ is the class of source-specific heads.
This formulation effectively treats all incorporated sources as equally beneficial for representation learning, an assumption that often breaks down in practice.  

Let $[M] = \{1, \cdots, M\}$ denote the collection of sources/clients, and $\calS\subseteq [M]$ be an arbitrary subpopulation. Let $n_i$ denote the number of samples of source $i$. 
Motivated by the above gap, we study the following technical question: 

\noindent \underline{\em Technical question:}
{\em 
Does there exist a subpopulation $\mathcal{S}\subseteq [M]$ such that learning a shared representation from $\sum_{i\in \calS}n_i$ samples is not only more accurate than using all $\sum_{i=1}^M n_i$ samples, but also attains statistical optimality?}

\noindent {\bf Contributions.} 
Our main contributions are summarized as follows. 
We primarily develop the theory under linear models with shared linear subspace, and further extend the resulting source-screening principles to nonlinear settings. 
\vspace{-0.5em}

\begin{itemize}[label={$\bullet$}, topsep=0pt, itemsep=-0.3em, leftmargin=15pt]
\item We show that, for a class of problem instances, a state-of-the-art subspace estimator achieves minimax statistical optimality when trained on a suitably chosen subset of sources, even if a large fraction of sources is discarded.
\item We formalize the notion of a desired subpopulation of sources and prove that training exclusively on such a subpopulation attains the minimax statistical optimality.
\item We develop an efficient algorithm that provably identifies a good subpopulation in the genie-aided setting, and propose principled heuristics for subpopulation selection in the absence of genie information.
\item Beyond the linear setting, we show that the source-screening principles extend to nonlinear models. 
\item We empirically validate the effectiveness of data source pre-screening and our proposed methods on both synthetic and real-world datasets.
\end{itemize}

\section{Related Work}\label{sec: related work}
\noindent {\bf Representation Learning.}
Representation learning has been widely adopted for few-shot learning tasks. 
Most relevant recent works \citep{du2021fewshot,collins2021exploiting,thekumparampil2021statistically,duchi2022subspace, niu2024collaborative, tian2023learning} focus on learning a low-dimensional shared representation across different tasks or data sources. \cite{du2021fewshot} considers $T$ source tasks and tries to learn a target task with much fewer data points. Their result shows full data utilization across all source tasks for the representation learning of the target task.  \cite{tripuraneni2021provable} works under a similar problem setting but focuses on providing statistical rates for efficient algorithms. They also provide a minimax lower bound for recovering the subspace. \cite{thekumparampil2021statistically} studies an alternating gradient-descent minimization method for subspace estimation, which achieves a near-optimal statistical rate. \cite{collins2021exploiting} proposes a federated learning framework and algorithm to learn the low-dimensional shared linear subspace, which is proven to converge to the ground truth representation with near-optimal sample complexity in a linear setting. \cite{duchi2022subspace} provides a statistical estimator for the subspace and proves an upper bound for the estimation error under the setting that the noises are heterogeneous across clients. \cite{niu2024collaborative} provides a split-averaging estimator and establishes new upper and lower bounds for the estimation error of the low-dimensional subspace. They also extend the results to nonlinear models. \cite{zhang2024sample} proposes an adaptation of the alternating minimization-descent algorithm for non-i.i.d. and non-isotropic covariates, and establishes a linear convergence to the ground truth. %

\noindent {\bf Multi-task Learning.} 
In multi-task learning, task-relatedness is modeled by introducing structured coupling among task-specific predictors \cite{crawshaw2020multi}. Early work assumes hard parameter sharing, where task predictors $\theta_i := (w_{c}, w_i)$ are decomposed into common parameters $w_{c}$ and task-specific parameters $w_i$, enforcing strong inductive bias \citep{caruana1997multitask}. Soft parameter sharing relaxes this assumption by considering $\theta_i$ in the form of $\theta_i = \tilde{\theta}_0 + w_i$ and regularizing pairwise parameter differences $\|\theta_i - \theta_{i^{\prime}}\|$ in training \cite{evgeniou2004regularized}.  \cite{lee2016asymmetric} assumes one task's parameter can be succinctly expressed as a linear combination of other tasks' parameters. 
Their approach aims to select only the most relevant task relationships and suppress transfer between unrelated tasks by enforcing non-negativity and sparsity constraints on the regularization graph. 
Progressive Neural Networks \cite{rusu2016progressive} considers primary tasks and auxiliary tasks. 
\cite{standley2020tasks} proposes a framework that partitions tasks into clusters. Specifically, for a task set $\calT$, there are $2^{|\calT|}-1$ combinations of partitions. 
Instead of exhaustive search, \cite{fifty2021efficiently} studies that when training all tasks in a single neural network, how does one task's gradient update affects other tasks' loss. By quantifying this inner-task affinity, they can find close to optimal auxiliary tasks. \cite{zhang2023real} focuses on the task competition problem in MTL, that is, certain tasks dominate the learning process and degrade the performance of other tasks. %
\cite{du2024parameter} further investigates the parameter-level competition across tasks. They introduce PCB-MERGING (Parameter Competition Balancing) that drops parameters with low importance score, leading to improved performance across domains, number of tasks, model sizes, etc. 
Further details on MTL can be found in \cite{crawshaw2020multi}.

\noindent{\bf Client Selection in Federated Learning (FL).} 
Our work is also related to client selection in FL. A couple of key differences are: (1) we study training a shared representation rather than minimizing the average of local cost, and (2)
we perform one-shot prescreening rather than active selection during training. Details are deferred to  Appendix \ref{app: related work: add}.

\section{Problem Setup}\label{sec:model}

Up to this point, we have used the terms ``source'' and ``client'' somewhat interchangeably. Henceforth, for ease of exposition, we will use ``source'' only. 

Similar to previous work  \cite{tripuraneni2021provable,collins2021exploiting,thekumparampil2021statistically,duchi2022subspace,du2021fewshot,tian2023learning,niu2024collaborative}, we consider a widely studied linear setup in which data from each source are generated by a linear model, and the models across sources share a low-dimensional linear subspace (i.e., the feature extractor $\phi$ in \eqref{eq: objective rewritten} is a low-dimensional matrix). We formally describe the setup next.  
Section \ref{sect: nonlinear models} extends our results to nonlinear models.

We consider a learning system that consists of a parameter server and $M$ sources, where each source $i$ observes $n_i$ data points $\{(x_{ij}, y_{ij})\}_{j=1}^{n_i}$. Let 
\(
N := \sum_{i=1}^M n_i
\)
denote the total sample size. The parameter server cannot access the data.  
For source $i\in [M]$ and sample $j\in[n_i]$, $x_{ij} \in \mathbb{R}^{d}$ is the covariate vector, and $y_{ij}\in\reals$ is the response generated by
\begin{align}
\label{eq:model-sup}
y_{ij}= x_{ij}^\intercal\theta_i^\star + \xi_{ij},
\end{align} 
where $\theta_i^\star \in \reals^d$ is the ground-truth parameter for source $i$ and $\xi_{ij}\in \reals$ is an additive noise. Let $\Gamma_i$ be the \emph{unknown} covariance matrix shared by the covariates $\{x_{ij}\}_{j=1}^{n_i}$ at source $i$, satisfying $\mathbb{E}[x_{ij}x_{ij}^\intercal]=\Gamma_i$ for all $j$.  In particular, the weighted parameters $\Gamma_i\theta_i^\star$ lie in a shared subspace of dimension $k\le d$, spanned by the columns of an orthonormal matrix $B^\star \in \reals^{d\times k}$. 
Then each source $i$ has its specific low-dimensional parameter $\alpha_i^\star \in \reals^k$, such that
\begin{align}
\label{eq:low-d-structure}
\Gamma_i\theta_i^\star = B^\star\alpha_i^\star.    
\end{align} 
Here, $B^\star$ corresponds to $\phi$, and $\alpha_i^\star$ corresponds to source-specific head $h_i$ in Eq. (\ref{eq: objective rewritten}). 
Henceforth,  for ease of exposition, we refer $\alpha_i^\star$'s as \underline{\em local heads}. 
 
It is worth noting that $B^\star$ is not identifiable without sufficient diversity across $\{\alpha_i^\star\}_{i=1}^M$. If the local heads span only a $k'$ dimensional subspace (where $k' < k$), then only partial $B^\star$ is revealed by these source tasks. Any missing directions cannot be recovered, and a downstream task lying in those directions would not benefit from the learned representation. 
This motivates us to investigate a key factor in determining the learnability of 
$B^\star$ in this problem \citep{du2021fewshot,tripuraneni2020theory,tripuraneni2021provable,collins2021exploiting,thekumparampil2021statistically,tian2023learning,zhang2024sample,niu2024collaborative}, which is the spectrum of the matrix 
\begin{align}
\label{eq:diversity-matrix}
D = \frac{1}{M}\sum_{i=1}^M \alpha_i^\star (\alpha_i^\star)^\intercal,     
\end{align} 
This matrix $D$ captures the diversity of local heads $\alpha_i^\star$. 
Let $\lambda_r, r\in[k]$ denote the $r$-th largest eigenvalue of $D$.
Under a set of standard assumptions, such as sub-Gaussian noises, sub-Gaussian covariates, and $\|\alpha_i^\star\| = O(1)$ for $i\in [M]$ (formally stated in Appendix \ref{app: statistical rate big table}), prior studies \citep{tripuraneni2021provable,du2021fewshot,collins2021exploiting,thekumparampil2021statistically,chua2021fine,duchi2022subspace,duan2023adaptive,tian2023learning,zhang2024sample} have analyzed the statistical error rates for learning $B^\star$, measured by the principal angle distance. 
\begin{definition}[Principal angle distance] 
\label{def: subspace_error}
Let $B, B^\star\in\calO^{d\times k}$ be orthonormal matrices, where $\mathcal O^{d\times k}:=\{B\in\mathbb R^{d\times k}:B^\top B=I_k\}$. 
Then the principal angle distance between $B$ and $B^\star$ is defined by: %
\begin{align*}
\|\sin\Theta(B, B^\star)\|  = \|BB^\intercal  - B^\star (B^\star)^\intercal\|,     
\end{align*} 
where $\|\cdot\|$ denotes the spectral norm. 
\end{definition}
For the general case, i.e., no additional conditions on $D$, the upper bound for the estimation error of $B^\star$ in \citep{tripuraneni2021provable,du2021fewshot,duchi2022subspace,duan2023adaptive} is of order 
$O\Big(\sqrt{\frac{d}{N\lambda_k^2}}\Big)$. 
More recently,  \cite{niu2024collaborative} establishes a refined upper bound of $O\pth{\sqrt{\frac{d\lambda_1}{N\lambda_k^2}} + \sqrt{\frac{Md}{N^2\lambda_k^2}}}$, 
which improves upon prior results in certain parameter regimes, such as in the well-represented setting where $\lambda_1=\Theta(\lambda_k)=\Theta(1/k)$.  
Furthermore, \citep{tripuraneni2021provable} establishes a lower bound of order  $\Omega\Big(\sqrt{\frac{1}{(N\lambda_k)}} + \sqrt{\frac{dk}{N}}\Big)$, which is is further tightened by \cite{niu2024collaborative} to the order $\Omega\pth{\sqrt{\frac{d}{N\lambda_k}} + \sqrt{\frac{Md}{N^2\lambda_k^2}}}$.

For the well-represented case, the upper and lower bounds in \citep{tripuraneni2021provable} are $O(\sqrt{\frac{dk^2}{N}})$ and $\Omega(\sqrt{\frac{dk}{N}})$, respectively, resulting in a multiplicative gap of $\Theta(\sqrt{k})$. 
Meanwhile, the estimator proposed in \cite{niu2024collaborative} achieves a rate of $O\pth{\sqrt{\frac{dk}{N}} + \sqrt{\frac{Mdk^2}{N^2}}}$, which matches the state-of-the-art minimax lower bound. In other words, for the well-represented case, the estimator in \cite{niu2024collaborative} is statistically minimax optimal.

Table \ref{tab:bounds_comparison} summarizes the state-of-the-art (SOTA) bounds in both general and well-represented cases. A more detailed comparison of upper and lower bounds across recent works is presented in Table \ref{tab:example1} in the Appendix.    %

\begin{table}[htbp]
\centering
\caption{\small Statistical error rate depends on the spectrum of $D$. Here, UB stands for Upper Bound, and LB stands for Lower Bound.}
\vspace{-0.1cm}
\begin{small}
\renewcommand{\arraystretch}{1.4}
\setlength{\tabcolsep}{6pt}
\begin{tabular}{>{\centering\arraybackslash}m{7em}|>{\centering\arraybackslash}m{1.5em}>{\centering\arraybackslash}m{10em}}
    \toprule
    &  & {\small \textbf{SOTA} \textbf{\cite{niu2024collaborative}}} \\
    \midrule
    \multirow{2}{=}{\centering \small \textbf{General Cases}}  
    & \small UB  
    & \small $O\pth{\sqrt{\frac{d\lambda_1}{N\lambda_k^2}} + \sqrt{\frac{Md}{N^2\lambda_k^2}}}$ \\ 
    \cmidrule{2-3}
    & \small LB 
    & \small $\Omega\pth{\sqrt{\frac{d}{N\lambda_k}} + \sqrt{\frac{Md}{N^2\lambda_k^2}}}$ \\
    \midrule
    \multirow{2}{=}{\centering \small \textbf{Well-represented Cases}} 
    & \small UB 
    & \multirow{2}{=}{\centering \small $\Theta\pth{\sqrt{\frac{dk}{N}} + \sqrt{\frac{Mdk^2}{N^2}}}$} \\
    \cmidrule{2-2}
    & \small LB 
    & \\ 
    \bottomrule
\end{tabular}
\label{tab:bounds_comparison}
\vspace{-0.2cm}
\end{small}
\end{table}

Throughout this paper, we use the Bachmann--Landau notations $o$, $\omega$, $O$, $\Omega$, and $\Theta$, and use $\widetilde{O}$ to hide polylogarithmic factors in quantities.

\section{Main Results}

This section is structured as follows: we first present (in Section \ref{sec: example}) a special motivating example where the local heads are mutually orthogonal to show the benefit of the source screening. Based on this motivating example, Section \ref{sec: genie-aided matrix refinement} formalizes what constitutes a statistically desirable subpopulation through the notion of an admissible subpopulation, namely, a subset of sources whose aggregate local-head geometry is well-conditioned and whose size is sufficiently large. Section \ref{sec: empirical search} is devoted to the design of algorithms for identifying admissible subpopulations:  
Algorithm \ref{alg: genie-aided} gives a genie-aided search method, %
and 
Algorithm \ref{alg: empirical} gives an empirical version based on data-driven proxies. 
\subsection{Potentials of Source Screening}
\label{sec: example}

In this section, through theoretical derivation and numerical illustration, we provide insights into the potential of source screening to improve statistical rates -- despite discarding a large portion of the data.  %
We study a special yet practically important family of problem instances in which there are $k$ distinct local heads that are mutually orthogonal and unit normed, i.e., letting %
$\nu_{[1]}^{\star},\ldots,\nu_{[k]}^\star$ denote the $k$ distinct local heads, we assume $\langle \nu_{[i]}^\star, \nu_{[j]}^\star\rangle=0$ for all $i\neq j$, and $\| \nu_{[i]}^\star\|=1$ for all $i\in[k]$. We work in the balanced-sample regime \(n_i=n\) for all \(i\in[M]\), so \(N=Mn\).

This orthogonal and normalized head setting provides both conceptual and analytical clarity. Orthogonality isolates the role of source composition from interactions induced by overlapping local structures. Moreover, the eigensystem of $D$ are explicitly computable, with eigenvalues and eigenvectors naturally linked to the diversity of local heads and the associated data volumes of different sources. This transparent characterization enables a fine-grained understanding of how source composition affects subspace learning and when source screening becomes beneficial.

Let $m_j$ denote the total data volume along $\nu_{[j]}^\star$, i.e., $m_j = n\sum_{i=1}^M  \indc{\alpha_i^\star = \nu_{[j]}^\star}.$ It is easy to see that $\sum_{j=1}^k m_j = N.$
Without loss of generality, let $m_1 \ge m_2\ge \cdots \ge m_k$. 
The diversity matrix $D$ can be rewritten as 
\begin{align*}
D %
= \frac{1}{N}\sum_{j=1}^k m_j \nu_{[j]}^\star (\nu_{[j]}^\star)^{\top} 
= \sum_{j=1}^k \frac{m_j}{N} \nu_{[j]}^\star (\nu_{[j]}^\star)^{\top}. 
\end{align*}
It is easy to see that $\nu_{[j]}^\star$ for $j\in [k]$ act as eigenvectors of the matrix $D$. The corresponding eigenvalues are 
\[
\lambda_1 = \frac{m_1}{N} := \beta_1,  \lambda_2= \frac{m_2}{N} := \beta_2, \cdots, \lambda_k=\frac{m_k}{N} := \beta_k. 
\]

Plugging those $\lambda_j$'s into the upper bound in Table \ref{tab:bounds_comparison}, we know that using the SOTA subspace estimator in \cite{niu2024collaborative}, one can achieve  
\begin{align}
\label{eq: orthogonal: upper bound: full}
\|\sin\Theta(B, B^\star)\| 
= O\pth{\sqrt{\frac{d}{N} \frac{\beta_1}{\beta_k^2}} + \sqrt{\frac{Md}{N^2\beta_k^2}}}. 
\end{align}
Now consider the genie-aided scenario in which, for each $j\in [k]$, exactly $m_k/n$ sources are retained, while the remaining sources are discarded. Let $\calS$ denote the collection of retained sources. Clearly, $|\calS| = km_k/n = k\beta_k M$. 
Denote the new data diversity matrix as $D^{\prime}$, i.e., 
\begin{align*}
D^{\prime} = \frac{1}{\sum_{j=1}^k m_k} \sum_{j=1}^k m_k \nu_{[j]}^\star  (\nu_{[j]}^\star)^{\top} = \frac{1}{k}\sum_{j=1}^k \nu_{[j]}^\star  (\nu_{[j]}^\star )^{\top}.  
\end{align*} 
It is easy to see that $\nu_{[j]}^\star$'s remain to be the eigenvectors of $D^{\prime}$ but with uniform eigenvalues, i.e., $\lambda_j^{\prime} = \frac{1}{k}$ for $j\in [k]$. 
Let $N^{\prime} = k m_k$ denote the total data volume in this scenario.  
Notably, when restricted to the subpopulation of sources $\calS$, the upper bound in Table \ref{tab:bounds_comparison} continues to hold.  
We have %
\begin{align}
\label{eq: orthogonal: upper bound: balanced}
\|\sin\Theta(B, B^\star)\| 
& = O\pth{\sqrt{\frac{d}{N\beta_k}} + {\sqrt{\frac{|\calS|d}{N^2\beta_k^2}}}},
\end{align}
whose first term is tighter than that of \eqref{eq: orthogonal: upper bound: full} by a factor of $\frac{\beta_1}{\beta_k}$, which can be significant as $\frac{\beta_1}{\beta_k} \gg 1$ is possible. The second term is also tighter, scaling with the number of sources $|\calS|$ that remain in the system.

On the other hand, a lower bound in \cite{niu2024collaborative} (particularly, Theorem 5.1) holds for these particular choices of local heads.  
\begin{theorem}[\cite{niu2024collaborative} (Informal)] 
\label{thm:lb-term1}
    Consider a system with $M$ sources and $N$ data points in total. %
    Assume $x_{ij}\sim N(0, I_d)$ and $\xi_{ij}\sim N(0, 1)$ independently for $i\in[M]$ and $j\in [n]$. Then for the model in \eqref{eq:model-sup}, when $d\ge (1+\rho_1)k$ for a constant $\rho_1>0$, we have 
        \begin{align*}
            \inf_{\widehat{B}\in\calO^{d\times k}}\sup_{B\in\calO^{d\times k}}   \mathbb{E}\Big[\big\|\sin\Theta(B, B)\big\|\Big]
            = \Omega\bigg(\sqrt{\frac{d}{N\lambda_k}} \wedge 1\bigg),
        \end{align*}
        where $\sqrt{\frac{d}{N\lambda_k}} \wedge 1 = \min \{\frac{d}{N\lambda_k}, 1\}$, and $\mathcal{O}^{d\times k}$ is the collection of orthonormal matrices as per Definition \ref{def: subspace_error}.  
\end{theorem}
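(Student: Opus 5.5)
We will prove the bound with a standard Fano-type argument over a packing of subspaces, following the template of the lower bound in \cite{niu2024collaborative}. The local heads are fixed to the orthogonal configuration of Section~\ref{sec: example}. Without loss of generality take $\nu_{[j]}^\star=e_j\in\R^k$, so that $\Gamma_i=I_d$ and $\theta_i^\star=B e_{j(i)}$. Here source $i$ observes the direction $Be_{j(i)}$, and the $k$-th column receives $m_k=N\lambda_k$ samples in total. The idea is that the weakest column of $B$ is seen only through $m_k$ samples, each with unit signal strength. Estimating that single direction in $\R^d$ to accuracy $\epsilon$ should therefore require $m_k\gtrsim d/\epsilon^2$ samples.

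\textbf{Packing.} Fix $B_0=[e_1,\dots,e_k]$. Build a local packing that perturbs only the $k$-th column inside the orthogonal complement $\mathrm{span}(e_{k+1},\dots,e_d)$, which has dimension $d-k\ge \rho_1 k/(1+\rho_1)\cdot\ldots\gtrsim d$ because $d\ge(1+\rho_1)k$. By Gilbert--Varshamov there are $\exp(c(d-k))$ unit vectors $u_1,u_2,\dots$ in that complement with pairwise distances at least $1/2$. For each such $u_\ell$, define $B_\ell$ by replacing the $k$-th column with $\sqrt{1-\epsilon^2}\,e_k+\epsilon u_\ell$. Each $B_\ell$ is orthonormal. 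Moreover $\|\sin\Theta(B_\ell,B_{\ell'})\|\gtrsim \epsilon\|u_\ell-u_{\ell'}\|\gtrsim\epsilon$, since the first $k-1$ columns coincide and the difference is governed by the rotated column.

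\textbf{KL bound.} For Gaussian design with unit noise, the KL divergence between the joint data distributions under $B_\ell$ and $B_{\ell'}$ is
\[
\tfrac12\sum_i n\|\theta_i^{(\ell)}-\theta_i^{(\ell')}\|^2 .
\]
Only sources with head $e_k$ contribute, so this equals $\tfrac12 m_k\epsilon^2\|u_\ell-u_{\ell'}\|^2\le 2m_k\epsilon^2=2N\lambda_k\epsilon^2$. Fano's inequality then gives error probability bounded below by a constant whenever $2N\lambda_k\epsilon^2+\log 2\le \tfrac12 c(d-k)$. We choose $\epsilon\asymp\sqrt{d/(N\lambda_k)}\wedge c'$. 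Converting the error probability to expectation by Markov's inequality yields $\Omega(\sqrt{d/(N\lambda_k)}\wedge 1)$. The cap at a constant handles the regime $N\lambda_k\lesssim d$, and there the bound is trivially of order $1$.

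\textbf{Main obstacle.} The delicate step is relating the metric to the perturbation. We must verify that $\|B_\ell B_\ell^\top-B_{\ell'}B_{\ell'}^\top\|$ is bounded below by a constant times $\epsilon$ and not of order $\epsilon^2$. The difference of projections is dominated by the cross terms $\epsilon\sqrt{1-\epsilon^2}\,(e_k(u_\ell-u_{\ell'})^\top+\text{transpose})$, whose spectral norm is $\asymp\epsilon\|u_\ell-u_{\ell'}\|$, while the remaining terms are $O(\epsilon^2)$. Taking $\epsilon$ below a small constant ensures the cross terms dominate. A secondary subtlety is that the supremum ranges over $B$ with the heads held fixed, so the packing must keep the heads equal to $e_j$. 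The construction above does exactly that.
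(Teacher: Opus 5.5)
The paper gives no proof of this statement. It imports Theorem~5.1 of \cite{niu2024collaborative} and uses it only for the orthogonal-head instance of Section~\ref{sec: example}. Your Fano argument is a correct, self-contained proof for exactly that instance. Holding the heads fixed and rotating only the weakest column is what puts $\lambda_k$ rather than $\lambda_1$ into the KL bound $2N\lambda_k\epsilon^2$.

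The separation step does not need the cross-term heuristic. With $b_\ell=\sqrt{1-\epsilon^2}\,e_k+\epsilon u_\ell$, one has exactly
$\|B_\ell B_\ell^\top-B_{\ell'}B_{\ell'}^\top\|=\sqrt{1-\langle b_\ell,b_{\ell'}\rangle^2}$ and $1-\langle b_\ell,b_{\ell'}\rangle=\tfrac12\epsilon^2\|u_\ell-u_{\ell'}\|^2$. So the separation lies in $[\epsilon\|u_\ell-u_{\ell'}\|/\sqrt2,\ \epsilon\|u_\ell-u_{\ell'}\|]$ once $\epsilon^2\le 1/2$.

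What the citation buys over your version is generality over arbitrary fixed heads, but your construction extends directly. Take a unit eigenvector $v$ of $AA^\top$ for its smallest eigenvalue and set $B_\ell=B_0(I_k-vv^\top)+(\sqrt{1-\epsilon^2}\,B_0v+\epsilon u_\ell)v^\top$. These matrices are orthonormal, and $\theta_i^{(\ell)}-\theta_i^{(\ell')}=\epsilon(u_\ell-u_{\ell'})\,v^\top\alpha_i^\star$. The KL then becomes $\tfrac12\epsilon^2\|u_\ell-u_{\ell'}\|^2\,n\,v^\top AA^\top v=\tfrac12\epsilon^2\|u_\ell-u_{\ell'}\|^2N\lambda_k$, and the rest of your argument is unchanged.

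Two small repairs are needed:
\begin{itemize}
\item Your dimension count is garbled; it should read $d-k\ge\frac{\rho_1}{1+\rho_1}d$.
\item Fano yields a positive constant only when $c(d-k)$ exceeds a fixed multiple of $\log 2$. Since $d\ge(1+\rho_1)k$ allows $d=O(1)$, cover that regime with Le Cam's two-point method on $u$ and $-u$. This gives $\Omega(1/\sqrt{N\lambda_k}\wedge 1)$, which is the same order there.
\end{itemize}
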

 
When $|\calS|\leq M\le N\beta_k = m_k$, \eqref{eq: orthogonal: upper bound: balanced} can be simplified as 
\begin{align*}
\|\sin\Theta(B, B^\star)\| 
=O\pth{\sqrt{d/(N\beta_k)}}, 
\end{align*}
matching the lower bound in Theorem \ref{thm:lb-term1}. 
In other words, for this family of problem instances on local heads $\{\alpha_i^\star\}_{i=1}^M$, 
in the genie-aided case, when a good subset of sources is given, the SOTA subspace estimator (i.e., the local averaging method) in \cite{niu2024collaborative} is {\bf statistically minimax optimal}, despite a large portion of sources being discarded. 

Beyond our theoretical results, we numerically illustrate the performance gap between naive data pooling and strategic subset selection in Fig.\ref{fig:motivating}. In settings where certain source groups dominate the population, standard estimators often fail to recover the shared subspace accurately due to the resulting representational bias. We show that by intentionally downsampling the majority groups to achieve a balanced distribution of heads, one can drastically reduce the reconstruction error (Definition \ref{def: subspace_error}) despite utilizing fewer total samples. This counterintuitive result demonstrates that for subspace learning, the overall composition of the data sources are often more critical than the raw sample volume. A comprehensive description of the estimators and the specific data-generating process is provided in Section \ref{sect:experiments}.

\subsection{On the Fundamentals of Source Screening}
\label{sec: genie-aided matrix refinement}
We now move beyond the orthonormal construction used in Section \ref{sec: example} and ask a more general question: what property should a subpopulation of sources satisfy in order to achieve statistical minimax optimality?   

\begin{figure}[htp]
    \centering
    \includegraphics[width=\linewidth]{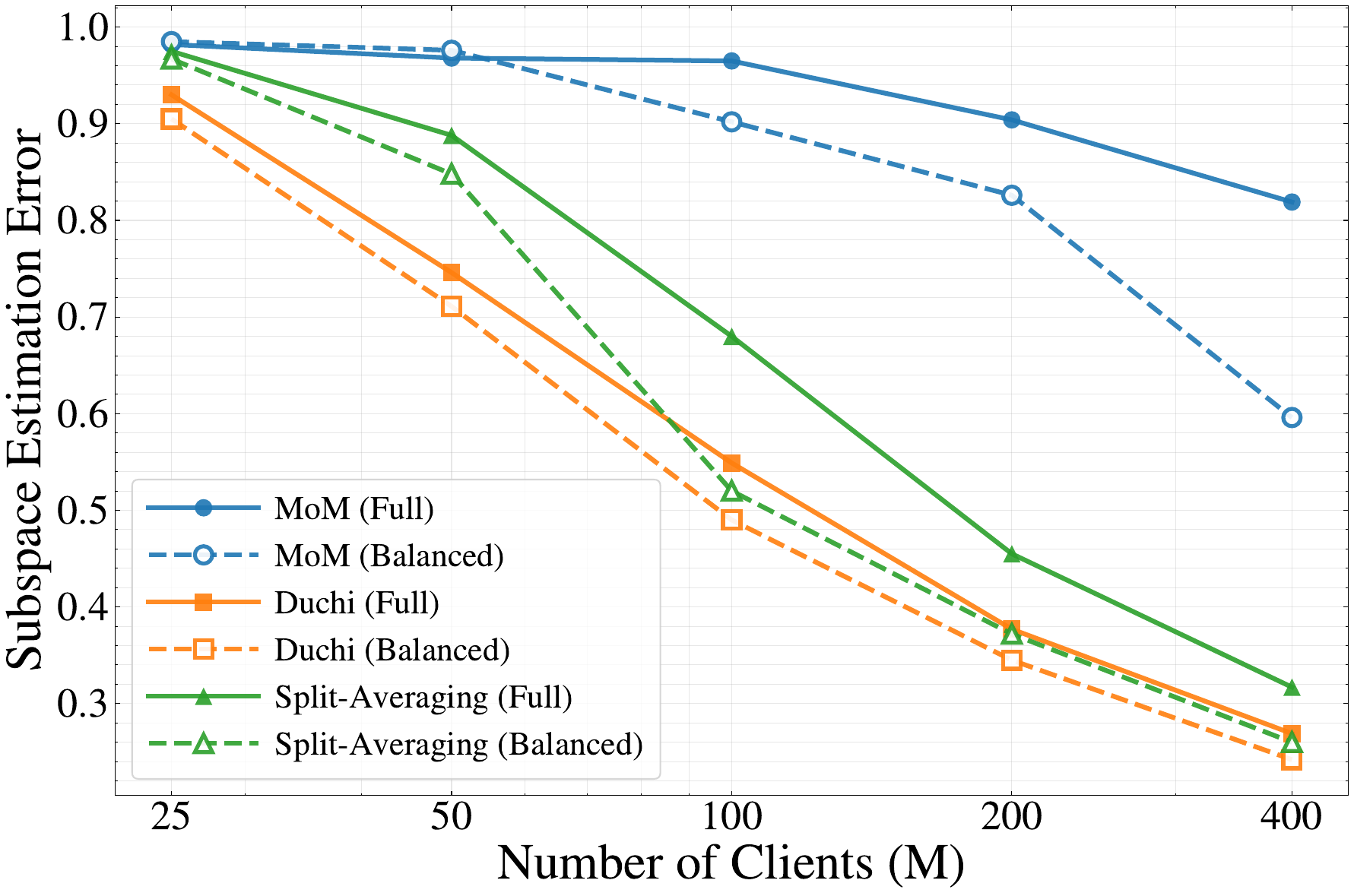}
    \caption{
    Subspace estimation error ($||\mathrm{sin}(B^\star, B)||$) in a clustered setting. While pooling the full population maximizes sample size, uneven representation introduces bias. Conversely, a smaller balanced subset recovers the latent basis more effectively across all tested estimators. See Section \ref{sect:experiments} for setup and estimator details.
    }
    \label{fig:motivating}
\end{figure}

\subsubsection{Desired Subpopulation and Existence}
\label{subsec: existence}

We begin with a couple of concrete examples to build intuition and motivate a formal definition of a desired subpopulation of sources. 
Recall from the motivating example in Section \ref{sec: example} that $\beta_1 = \frac{m_1}{N}$ and $\beta_k = \frac{m_k}{N}$, i.e., they are the fractions of sources whose underlying truth is $\nu_{[1]}^\star$ and $\nu_{[k]}^\star$, respectively. 
Let $\calS_{\ell}\subseteq \{i: \alpha_i^\star = \nu^\star_{[\ell]}\}$ such that $|\calS_{\ell}| = \beta_k M$ for $\ell=1, \cdots, k$. Those subsets exist because 
$|\{i: \alpha_i^\star = \nu^\star_{[\ell]}\}| \ge |\{i: \alpha_i^\star = \nu^\star_{[k]}\}| = \beta_k M$.
Let $\calS = \cup_{\ell=1}^k \calS_{\ell}$, and $A = [\alpha_1^\star, \cdots, \alpha_M^\star]$. 
It is easy to verify that the condition number, denoted by $\kappa(\cdot)$, of the matrix $\sum_{i\in \calS}\alpha_i^\star(\alpha_i^\star)^{\top}$ is at a constant level, i.e.,  
$
\kappa\pth{\sum_{i\in \calS}\alpha_i^\star(\alpha_i^\star)^{\top}} = \Theta(1).  
$
In addition, 
$
|\calS| = k \beta_k M = k\lambda_{\min}(AA^{\top}), 
$ where $\lambda_{\min}(AA^{\top})$ represents the smallest eigenvalue of $AA^{\top}$.
From our analysis in Section \ref{sec: example}, we know that training on the data kept on this subpopulation of sources with the split-local averaging algorithm in \cite{niu2024collaborative} is minimax optimal.

At the opposite extreme, when $\alpha_i^\star \overset{\iid}{\sim} \calN(\bf{0}, I)$, it follows from \citep{niu2024collaborative} that, with high probability,  
$
\kappa\pth{\sum_{i\in [M]}\alpha_i^\star(\alpha_i^\star)^{\top}} = \Theta(1),
$
and 
$
|\calS| = \Theta\pth{k\lambda_{\min}(AA^{\top})} = \Theta\pth{k \frac{M}{k}} = \Theta(M).   
$
Furthermore, training on the data kept on all sources achieves statistical minimax optimality.  

Building on these insights, we provide a formal notion of a desired subpopulation as follows. 
As a gentle entry point to this line of investigation, yet without losing much generality, we assume matrix $A$ is standardized, i.e., each column has norm 1 \citep{tropp2009column}.  
\begin{definition}[Admissible subpopulation]
\label{def: good subpopulation}
Let $A\in \reals^{k\times M}$ be a standardized matrix such that its columns span $\reals^k$. We say a subpopulation $\calS\subseteq [M]$ is admissible if %
$\kappa\pth{\sum_{i\in \calS} \alpha_i^\star(\alpha_i^\star)^{\top}} = \Theta(1)$, 
and %
$|\calS| = \Theta(k\lambda_{\min}(AA^{\top}))$.  

\end{definition}

\begin{theorem}[Statistical minimax optimality]
\label{thm: minimax training}
Let $\calS\subseteq [M]$ be a given subpopulation that satisfies the conditions in Definition \ref{def: good subpopulation}. Suppose that each source independently collects local datasets that satisfy Assumptions \ref{ass:covariates} and \ref{ass:sub-gaussian-noise}. Suppose that $n_i = n$ for $i\in [M]$. 
When $\lambda_k \ge \frac{1}{n}$, one can achieve minimax optimal rates $O(\sqrt{d/(N\lambda_k)})$ by restricting training to the datasets on this subpopulation only. 
\end{theorem}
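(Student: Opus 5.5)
The plan is to apply the SOTA upper bound of \cite{niu2024collaborative} (Table \ref{tab:bounds_comparison}, general case) to the restricted system of sources $\calS$. This bound holds for any collection of sources satisfying Assumptions \ref{ass:covariates} and \ref{ass:sub-gaussian-noise}. I then compare the resulting rate with the lower bound of Theorem \ref{thm:lb-term1} for the full population.

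\textbf{Restricted diversity matrix.} Write $N_\calS := n|\calS|$ and
\[
D_\calS := \frac{1}{|\calS|}\sum_{i\in\calS}\alpha_i^\star(\alpha_i^\star)^\top,
\]
with eigenvalues $\lambda_1^\calS\ge\cdots\ge\lambda_k^\calS$. Since the columns of $A$ are unit norm, $\mathrm{tr}(D_\calS)=1$. Together with $\kappa(D_\calS)=\Theta(1)$ from Definition \ref{def: good subpopulation}, this gives $\lambda_1^\calS=\Theta(\lambda_k^\calS)=\Theta(1/k)$. So the restricted system is well-represented, and the upper bound becomes
\[
O\Big(\sqrt{\tfrac{dk}{N_\calS}}+\sqrt{\tfrac{|\calS|dk^2}{N_\calS^2}}\Big)
=O\Big(\sqrt{\tfrac{dk}{n|\calS|}}+\sqrt{\tfrac{dk^2}{n^2|\calS|}}\Big).
\]

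\textbf{Substituting the subpopulation size.} By Definition \ref{def: good subpopulation}, $|\calS|=\Theta(k\lambda_{\min}(AA^\top))$. With $D=AA^\top/M$, this is $\lambda_{\min}(AA^\top)=M\lambda_k$, so $n|\calS|=\Theta(kN\lambda_k)$. The first term therefore becomes $\sqrt{dk/(kN\lambda_k)}=\sqrt{d/(N\lambda_k)}$. The second term is $\sqrt{dk^2/(n\cdot kN\lambda_k)}=\sqrt{dk/(nN\lambda_k)}$, which is at most a constant times the first term exactly when $k\lesssim n$.

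\textbf{Main obstacle.} The hard step is controlling this second term using the hypothesis $\lambda_k\ge 1/n$. By the trace identity, $k\lambda_k\le \mathrm{tr}(D)=1$, so $k\le 1/\lambda_k\le n$. Hence the second term is $O(\sqrt{d/(N\lambda_k)})$. This is precisely where the hypothesis enters; it plays the same role as the condition $M\le N\beta_k$ in Section \ref{sec: example}.

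\textbf{Matching the lower bound.} Theorem \ref{thm:lb-term1} gives $\Omega(\sqrt{d/(N\lambda_k)}\wedge 1)$ for any estimator that uses all $N$ samples. The lower bound applies a fortiori to estimators restricted to $\calS$. The restricted estimator thus attains the minimax rate up to constants in the nontrivial regime $d\lesssim N\lambda_k$; otherwise the bound is trivially $O(1)$ because $\|\sin\Theta\|\le 1$.

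A secondary point to verify is that the upper-bound theorem of \cite{niu2024collaborative} requires only the stated assumptions and bounded head norms, both of which hold for $\calS$. The sample-size conditions in that theorem should also be checked, and if needed absorbed into the regime $N\lambda_k\gtrsim d$.
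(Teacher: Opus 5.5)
Your proposal is correct and follows essentially the same route as the paper's proof. Both apply the split-averaging upper bound of \cite{niu2024collaborative} to the restricted system $\mathcal{S}$, use admissibility to get $\lambda'_1\asymp\lambda'_k\asymp 1/k$ and $n|\mathcal{S}|\asymp kN\lambda_k$, and match the resulting $\sqrt{d/(N\lambda_k)}$ against the lower bound.

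You are in fact more careful than the paper in two places. The paper asserts $\lambda'_1\asymp\lambda'_k\asymp 1/k$ without your trace argument. It also leaves the second term $\sqrt{|\mathcal{S}|dk^2/N'^2}$ of the restricted bound implicit, using $\lambda_k\ge 1/n$ only to collapse the two-term lower bound, whereas you control that term explicitly via $k\le 1/\lambda_k\le n$.
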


Intuitively, this result shows that as long as there exists a sufficiently “good” subpopulation of sources, training exclusively on this group is already statistically optimal, despite disregarding the data on other sources.  
In other words, restricting attention to such a carefully chosen subset does not incur any statistical loss; rather, it can lead to a significant statistical gain by mitigating bias introduced by less informative or misaligned sources.

It is worth noting that, in the proof of Theorem \ref{thm: minimax training}, the optimal rate is achieved using the split-sample averaging algorithm proposed in \cite{niu2024collaborative}. However, our results are not tied to this specific algorithm. In fact, any algorithm that attains statistical minimax optimality under well-conditioned settings would suffice.

When $\lambda_{\min}(AA^{\top}) = O(1)$, a good subpopulation as per Definition \ref{def: good subpopulation}  often exists, formally stated in Theorem \ref{thm: existence of desired subpopulation}.  
\begin{theorem}
\label{thm: existence of desired subpopulation}
Let $A = [\alpha_1^\star, \cdots, \alpha_M^\star] \in \reals^{k\times M}$ be standardized. 
Suppose that $\|A\|^2 \lesssim \frac{M}{k}$. 
When $M$ is sufficiently large, there exists a subset $\calS\subseteq [M]$ such that 

$
\kappa\big(\sum_{i\in \calS}\alpha_i^\star(\alpha_i^\star)^{\top}\big) = \Theta(1), ~ \text{and}~ |\calS| = \Omega(k \lambda_{\min}(AA^{\top})). 
$  
\end{theorem}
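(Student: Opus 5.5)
The natural tool is a column-subset-selection result for matrices with unit-norm columns: the restricted invertibility theorem of Bourgain--Tzafriri, in the form sharpened by Spielman--Srivastava and by \cite{tropp2009column}. The version I would invoke reads as follows. For $A\in\mathbb{R}^{k\times M}$ with unit-norm columns and any $\epsilon\in(0,1)$, there is a subset $\calS$ with
\[
|\calS|\ \ge\ \Big\lfloor \epsilon^2 \frac{\|A\|_F^2}{\|A\|^2}\Big\rfloor
\]
such that all singular values of $A_{\calS}$ lie in $[1-\epsilon,\,1+\epsilon]$. Note that $\|A\|_F^2 = M$ because the columns are standardized. The plan is to apply this with a constant $\epsilon$, say $\epsilon = 1/2$, and then verify the two conditions of the theorem separately.

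First, the conditioning. The singular values of $A_{\calS}$ lie in $[1/2,3/2]$, and
\[
\sum_{i\in\calS}\alpha_i^\star(\alpha_i^\star)^\top = A_{\calS}A_{\calS}^\top .
\]
I would need this matrix to be full rank $k$. That requires $|\calS|\ge k$, and restricted invertibility bounding the singular values of $A_{\calS}$ as a $k\times|\calS|$ matrix means $A_{\calS}^\top A_{\calS}$ is well conditioned. To control $A_{\calS}A_{\calS}^\top$ on $\mathbb{R}^k$, I also need $|\calS|\ge k$. In that case $A_{\calS}^\top A_{\calS}\succeq \tfrac14 I_{|\calS|}$ can only hold if $|\calS|\le k$, so in fact $|\calS|=k$ exactly. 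The nonzero eigenvalues of the two Gram matrices then coincide and lie in $[1/4,9/4]$, giving condition number at most $9$.

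This exposes the main obstacle. Plain restricted invertibility produces at most $k$ columns. The theorem instead wants
\[
|\calS| = \Omega\big(k\lambda_{\min}(AA^\top)\big),
\]
which can be as large as $\Theta(M)$, for instance in the Gaussian example. So I would instead use a two-sided, spectral-sparsification style selection. The goal is a multiset or subset $\calS$ satisfying
\[
c_1\,\frac{|\calS|}{M}\,\lambda_{\min}(AA^\top)\, I \ \preceq\ A_{\calS}A_{\calS}^\top\ \preceq\ c_2\,\frac{|\calS|}{k}\, I .
\]
The upper bound follows from $\|A\|^2\lesssim M/k$ under random subsampling of a $p$-fraction of the columns: matrix Chernoff gives $\|A_{\calS}\|^2\lesssim pM/k$ once $pM/k\gtrsim \log k$, and this is where the hypothesis that $M$ is sufficiently large enters. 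The lower bound needs the smallest eigenvalue to be comparable to $|\calS|/k$ as well, which random sampling cannot supply when $\lambda_{\min}(AA^\top)\ll M/k$.

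So the concrete route I would take is the one used in the motivating example, generalized. Run a Batson--Spielman--Srivastava type barrier or greedy selection whose aim is an $\calS$ with $A_{\calS}A_{\calS}^\top\approx \tfrac{|\calS|}{k}I$. The hypothesis $\|A\|^2\lesssim M/k$ plays the role of the spread condition that keeps any single direction from dominating. Within $AA^\top$, the ``mass'' available in the weakest direction is $\lambda_{\min}(AA^\top)$. Balancing by discarding excess mass in the stronger directions should leave a total of about $k\lambda_{\min}(AA^\top)$ columns, which matches the orthogonal example, where $|\calS|=k\beta_kM$.

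The step I expect to be hardest is the lower eigenvalue bound for this balanced subset: showing that discarding columns to equalize directions retains $\Omega(k\lambda_{\min})$ columns while keeping every direction at least a constant fraction of the average. I would attempt it through a fractional relaxation. The target is weights $w_i\in[0,1]$ with $\sum_i w_i\alpha_i^\star(\alpha_i^\star)^\top = \lambda_{\min}(AA^\top)\cdot P$ for a well-conditioned $P$, and with $\sum_i w_i=\Theta(k\lambda_{\min})$. A natural choice is
\[
w_i \propto (\alpha_i^\star)^\top\,\lambda_{\min}(AA^\top)\,(AA^\top)^{-1}\alpha_i^\star \le 1 .
\]
I would then round these weights to a subset via matrix Chernoff, which requires $k\lambda_{\min}\gtrsim k\log k$ and is again covered by taking $M$ large.
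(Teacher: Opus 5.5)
Your objection to plain restricted invertibility is correct, and it is exactly where the paper's own argument breaks. The paper peels off Bourgain--Tzafriri blocks $\calS_t$ from the residual matrices and glues them with Weyl's inequality ($\kappa(P+Q)\le\max\{\kappa(P),\kappa(Q)\}$ for positive definite $P,Q$). But the proof of Theorem~\ref{thm: classic BT} only yields $\|A_{\calS_t}^\top A_{\calS_t}-I\|\le 1/2$ with $|\calS_t|=\lceil c\,\text{st.rank}(A_t)\rceil<k$. So every $A_{\calS_t}A_{\calS_t}^\top$ is singular, and the gluing step never applies.

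Your replacement, however, has a genuine gap at the fractional-relaxation step. The facts $w_i\le 1$ and $\sum_i w_i=k\lambda_{\min}(AA^\top)$ are correct. They follow from $AA^\top\succeq\lambda_{\min}(AA^\top)I$ and $\sum_i(\alpha_i^\star)^\top(AA^\top)^{-1}\alpha_i^\star=k$. But the claim that $\sum_i w_i\alpha_i^\star(\alpha_i^\star)^\top=\lambda_{\min}(AA^\top)P$ with $P$ well conditioned is unproved and false in general. It holds in the orthogonal example only because every head there is an eigenvector of $AA^\top$. The heuristic of ``discarding excess mass in the strong directions'' fails as soon as a single column carries mass in both a strong and a weak direction, since dropping it removes both.

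Here is a counterexample. Take $k=2$, with half the columns equal to $(\cos\theta,\sin\theta)^\top$ and half equal to $(\cos\theta,-\sin\theta)^\top$.
\begin{itemize}
\item Then $AA^\top=M\,\mathrm{diag}(\cos^2\theta,\sin^2\theta)$, so $\|A\|^2\le M=2M/k$ and the hypothesis holds.
\item All your weights coincide ($w_i=2\sin^2\theta$). Your weighted matrix is therefore a multiple of $AA^\top$, with condition number $\cot^2\theta$.
\item Worse, for every subset $\calS$ (indeed for any nonnegative weights), the diagonal of $A_{\calS}A_{\calS}^\top$ is $(|\calS|\cos^2\theta,\ |\calS|\sin^2\theta)$. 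Hence $\kappa(A_{\calS}A_{\calS}^\top)\ge\cot^2\theta$.
\end{itemize}
Choosing $\sin^2\theta=M^{-1/2}$ gives $\lambda_{\min}(AA^\top)=\sqrt{M}\to\infty$, while no subset has bounded condition number. For general even $k$, stack $k/2$ orthogonal copies of this block with $2M/k$ columns each; this keeps $\|A\|^2\le 2M/k$ and $\text{st.rank}(A)\ge k/2$, and the same diagonal argument applies blockwise.

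So the statement as written cannot be established by any route. It needs an additional structural hypothesis. One option is heads clustered along nearly orthogonal directions, as in the motivating example. Another is $\lambda_{\min}(AA^\top)\gtrsim M/k$, in which case $\calS=[M]$ already works.

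A lesser issue: your matrix-Chernoff rounding needs $\lambda_{\min}(AA^\top)\gtrsim\log k$, which does not follow from $M$ being large.
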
 
It is worth noting that the condition $\|A\|^2 \lesssim \frac{M}{k}$ does not imply the condition number of $A$ is small. In fact, its condition number can be arbitrarily bad. To see this, consider the example in which $k$ is even, and the top $k/2$ eigenvalues of $AA^{\top}$ are all $(\frac{2M}{k} - \epsilon)$ for some small $\epsilon$, while the remaining $k/2$ eigenvalues are all $\epsilon$. 
In this example, $\kappa(A) = \sqrt{\frac{\frac{2M}{k} - \epsilon}{\epsilon}}$, which can be arbitrarily large. Theorem~\ref{thm: existence of desired subpopulation} indicates that even if the full matrix $A$ is poorly conditioned, one can still identify a sufficiently large subset of sources whose collective structure is well-conditioned. 
Conversely, including all sources may be detrimental to the training.

The proof of Theorem~\ref{thm: existence of desired subpopulation} builds upon the notion of stable rank and the classic result in \cite{bourgain1987invertibility}. 

The stable rank of a matrix is defined as 
\begin{align}
\label{eq: st rank}
\text{st.rank} (A) := \|A\|^2_F/\|A\|^2.  
\end{align}
Intuitively, the stable rank measures the effective ``energy spread'' of a matrix. Unlike the traditional rank, it tells you how many ``significant directions'' the matrix really has, in a way that is robust to small perturbations. 
In general, $\text{st.rank} (A) \le \text{rank}(A)$. 

Let $A_{\calS_0}$, where $\calS_0\subseteq [M]$, denote the submatrix of $A$ that contains only the collection of columns in $\calS_0$. 
\begin{theorem}\citep{bourgain1987invertibility}
\label{thm: classic BT}
Suppose matrix $A$ is standardized, and $\text{st.rank}(A) = \omega(1)$. 
Then for sufficiently large $M$ and $k$, there exists a set $\calS$ of columns such that 
$|\calS| = \lceil c\cdot \text{st.rank}(A)\rceil$ and $\kappa\pth{A_{\calS}A^{\top}_{\calS}} \le 3$, for some absolute constant $c>0$. 
\end{theorem}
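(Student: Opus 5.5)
The statement is the Bourgain--Tzafriri restricted invertibility theorem. I would prove it by the spectral-sparsification route of Spielman--Srivastava rather than Bourgain and Tzafriri's original probabilistic argument. The barrier method gives $c$ explicitly and only needs the columns to be unit vectors, which is exactly our standardization.

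\textbf{Reduction.} First normalize so that $\|A\|=1$ is replaced by working with $AA^\top$ directly. Write $s:=\text{st.rank}(A)=\|A\|_F^2/\|A\|^2 = M/\|A\|^2$, using $\|\alpha_i^\star\|=1$. Fix a target size $t=\lceil \epsilon s\rceil$ for a small constant $\epsilon$.

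\textbf{Construction.} We build $\calS=\{i_1,\dots,i_t\}$ greedily. We maintain $B_j=\sum_{l\le j}\alpha_{i_l}^\star(\alpha_{i_l}^\star)^\top$, but only on the span of the already chosen vectors. Controlling $\kappa(A_\calS A_\calS^\top)$ means controlling the nonzero spectrum of $A_\calS^\top A_\calS$, the $t\times t$ Gram matrix. I would use the version in which one tracks a lower barrier $\ell_j$ and the potential
\[
\Phi_j=\mathrm{tr}\big(A A^\top (B_j-\ell_j I)^{-1}\big)
\]
restricted to the orthogonal complement of the chosen span. Equivalently, follow Spielman--Srivastava's proof of restricted invertibility, in which one only needs a lower barrier. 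The upper bound on the selected Gram matrix comes for free on average, because the chosen vectors are nearly orthogonal when $t\ll s$.

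\textbf{Key step.} Show that at each step there exists an index $i$ such that adding $\alpha_i^\star$ lets the barrier move up by $\delta$ while the potential does not increase. Average the rank-one update (Sherman--Morrison) inequality over all $i$ with weights $1$, using $\sum_i \alpha_i^\star(\alpha_i^\star)^\top = AA^\top \preceq \|A\|^2 I$ and $\sum_i\|\alpha_i^\star\|^2=M$. The averaged condition becomes a scalar inequality in $\delta$, $\Phi$, $M$, and $\|A\|^2$. It is satisfiable with $\delta\asymp 1$ provided $j\lesssim \epsilon M/\|A\|^2=\epsilon s$. After $t$ steps, $\lambda_{\min}(A_\calS^\top A_\calS)\ge (1-\sqrt{\epsilon})^2$, and a symmetric upper-barrier argument gives $\lambda_{\max}\le(1+\sqrt\epsilon)^2$. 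Choosing $\epsilon$ small makes the ratio at most $3$. Since the nonzero eigenvalues of $A_\calS A_\calS^\top$ and $A_\calS^\top A_\calS$ coincide, this gives $\kappa\le 3$ on the span, and $|\calS|=\lceil c\,s\rceil$ with $c=\epsilon$.

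\textbf{Role of the hypotheses.} The assumptions that $\text{st.rank}(A)=\omega(1)$ and that $M,k$ are large are used only to absorb the ceiling, i.e.\ to ensure $t\ge1$ and that rounding does not destroy the constant. The main obstacle is the averaging step: choosing the barrier shifts so that the upper and lower potentials can be controlled simultaneously. This is where I would follow the Batson--Spielman--Srivastava two-barrier bookkeeping. If that gets messy, I would instead cite Bourgain--Tzafriri's original random-subset argument: take a random subset of size $\asymp s$, show $\E\|A_\calS^\top A_\calS-I\|\le 1/2$ via a decoupling/Rudelson-type bound, and then use a Sauer--Shelah-type extraction.
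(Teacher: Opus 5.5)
Your lower-bound half is fine. Use the Spielman--Srivastava potential $\operatorname{tr}\big(AA^\top(B-bI)^{-1}\big)$ on all of $\mathbb{R}^k$: the kernel of $B$ supplies the negative part, and nothing is restricted to the orthogonal complement. Let the barrier move \emph{down}, by $\asymp 1/(\sqrt{\epsilon}\,\mathrm{st.rank}(A))$ per step, as the rank grows. This gives $\lfloor\epsilon\,\mathrm{st.rank}(A)\rfloor$ columns with $\lambda_{\min}(A_{\mathcal S}^\top A_{\mathcal S})>(1-\sqrt{\epsilon})^2$.

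The gap is the other half. The bound $\kappa\le 3$ also needs $\|A_{\mathcal S}\|^2=O(1)$, and you never prove it. Your claim that it ``comes for free'' is wrong, for two reasons. First, the lower-barrier step only guarantees that \emph{some} admissible index exists, so it gives no control on the top eigenvalue. Second, near-orthogonality does not bound the spectral norm: the unit vectors $\sqrt{1-\eta}\,e_j+\sqrt{\eta}\,e_0$, $j=1,\dots,t$, have pairwise inner products $\eta$ but top Gram eigenvalue $1-\eta+\eta t$. The upper barrier is also not a mirror image of the lower one. Every selected column enters with weight exactly $1$, so you cannot pick the weight in $[1/L,1/U]$ as Batson--Spielman--Srivastava do. You must instead show three things:
\begin{itemize}
\item a \emph{single} index satisfies both barrier conditions at once;
\item the upper potential is weighted by $AA^\top$, since the unweighted $\operatorname{tr}(uI-B)^{-1}$ starts at $k/u_0$ and forces $u_0\gtrsim k/\mathrm{st.rank}(A)$;
\item the constants are sharp enough to reach ratio $3$.
\end{itemize}
You call this ``the main obstacle'' and then defer it, so the argument stops exactly at its crux.

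Your fallback also fails as stated, because random subsets of size $\asymp\mathrm{st.rank}(A)$ need not have small spectral deviation. Take $A=[I_k\;I_k]$, which is standardized with $\|A\|^2=2$ and $\mathrm{st.rank}(A)=k$. A uniformly random set of $\lceil ck\rceil$ columns contains both copies of some $e_j$ with probability $1-e^{-\Omega(c^2k)}$. Then $A_{\mathcal S}^\top A_{\mathcal S}$ is singular, and $\mathbb{E}\|A_{\mathcal S}^\top A_{\mathcal S}-I\|\ge 1-o(1)$.

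This is why the paper, following Tropp, controls only a weaker norm in the random step. Poissonization and the Rudelson--Vershynin lemma, fed by $\|H\|_{\infty\to1}\le M\|A\|^2$ and $\|H\|_{\mathrm{col}}\le M\|A\|$, give $\mathbb{E}\|A_{\mathcal S_0}^\top A_{\mathcal S_0}-I\|_{\infty\to1}\le 40|\mathcal S_0|(2c+2\sqrt{2c})$. The paper then applies Grothendieck factorization $A_{\mathcal S_0}^\top A_{\mathcal S_0}-I=DTD$. It keeps the at least $|\mathcal S_0|/2$ indices with $d_{jj}^2\le 2/|\mathcal S_0|$. This yields $\|A_{\mathcal S}^\top A_{\mathcal S}-I\|\le 320(c+\sqrt{2c})\le 1/2$, so both eigenvalue bounds come out at once. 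That factorization-based pruning is the extraction step your fallback is missing. If you keep the barrier route instead, the unit-weight two-sided selection step is what you must actually prove.
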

We prove Theorem \ref{thm: classic BT} for completeness in the Appendix \ref{app: proof of classic BT}.

\subsubsection{Algorithm in the Genie-Aided Selection}
\label{subsec: genie-aided efficient algorithm}
Motivated by the encouraging findings in Section \ref{subsec: existence}, we next turn to algorithmic solutions. Fortunately, the proof of Theorem~\ref{thm: existence of desired subpopulation} is partially constructive, while the analysis in Theorem~\ref{thm: classic BT} provides additional algorithmic insights. Together, they naturally lead to a polynomial-time algorithm for identifying a desired subpopulation.

\begin{algorithm}[H]
\caption{Genie-aided Subpopulation Search}
\label{alg: genie-aided}
\begin{algorithmic}[1]
\STATE \textbf{Input:} A full-row-rank standardized matrix $A = [\alpha_1^\star, \alpha_2^\star, \cdots, \alpha_M^\star]$, where $\norm{\alpha_i^\star}=1$ for $i=1, \cdots, M$, and $\|A\|^2 \lesssim \frac{M}{k}$. An absolute constant $c^*$ for which $320(c^*+ \sqrt{2c^*}) \le 0.5$. A target success rate $\delta \in (0,1)$;\;   
\STATE \textbf{Output:} A set of column indices $\calS\subseteq [M]$.\; 
\vspace{0.5em} 
\STATE Compute $\text{st.rank}(A)$. 
\IF{$c^*\cdot \text{st.rank($A$)} < 1$ }
\STATE ~ {\bf Return } $\emptyset$, and display {\em ``low stable rank''}; 
\ENDIF 
\STATE $s \gets \lceil c^*\cdot \text{st.rank}(A)\rceil$, $A_1\gets A$, and $\calS \gets \emptyset$;\; 
\STATE Compute $\lambda_{\min}(AA^{\top})$;   
\STATE Compute $\|A_1\|^2$; \; 
\STATE $\tilde{c} \gets \frac{\|A_1\|^2}{M/k}$; \;   
\FOR{$t=1, \cdots, \lceil \lambda_{\min}(AA^{\top})\rceil$}
\IF{$\text{st.rank}(A_t) \ge \frac{k}{2\tilde{c}}$}   
\FOR{$\ell =1, ..., \log_{8/7}\frac{\lambda_{\min}(AA^{\top})}{\delta}$}  
\STATE Draw uniformly at random $\tilde{\calS}_t$ with size $s$;  
\STATE Compute the $H_{\tilde{\calS}_t\times \tilde{\calS}_t} = A^{\top}_{\tilde{\calS}_t} A_{\tilde{\calS}_t} - I_{s}$;     
\IF{$\|H_{\tilde{\calS}_t\times \tilde{\calS}_t}\|_{\infty \to 1} \le \frac{s}{8}$}
\STATE Perform Grothendieck Factorization on $H_{\tilde{\calS}_t\times \tilde{\calS}_t}$ to obtain $H_{\tilde{\calS}_t\times \tilde{\calS}_t} = D_t T_t D_t$;    
\STATE Let $\calS_t = \{j: d_{jt}^2 \le 2/s, j\in \tilde{\calS}_t \}$, where $d_{jt}$ is the $j$-th diagonal entry of $D_t$;   
\STATE {\bf Break}. %
\ENDIF  
\ENDFOR   
\STATE Remove columns in $\calS_t$ from $A_t$ to obtain $A_{t+1}$;  
\ELSE \STATE {\bf Break} %
\ENDIF   
\ENDFOR  
\STATE Set $t^* \gets t$; %
\IF{$\text{st.rank}(A_{t^*}) \ge \frac{k}{2\tilde{c}}$}
\STATE $\calS \gets \cup_{r=1}^{t^*} \calS_r$;  
\ELSE \STATE $\calS \gets \cup_{r=1}^{t^*-1} \calS_r$;  
\ENDIF 

\end{algorithmic}
\end{algorithm}

The resulting procedure is summarized in Algorithm~\ref{alg: genie-aided}. Its outer loop follows the constructive selection scheme behind Theorem~\ref{thm: existence of desired subpopulation}: at round \(t\), it searches within the current residual matrix \(A_t\) for a block \(\calS_t\), removes the selected columns to form \(A_{t+1}\), and repeats while the residual matrix has sufficient stable rank. The inner loop carries out this search through repeated random trials, as suggested by Theorem~\ref{thm: classic BT}: it samples a candidate set \(\tilde{\calS}_t\), checks whether the candidate columns have a Gram matrix close to the identity, i.e., whether \(A_{\tilde{\calS}_t}^\top A_{\tilde{\calS}_t}-I_{s}\) is small in
\(\|\cdot\|_{\infty\to1}\), and then uses Grothendieck factorization to prune \(\tilde{\calS}_t\) into a well-conditioned block \(\calS_t\). The final output is the union of all accepted blocks.

\begin{theorem}
\label{thm: algorithm performance}
Let $c^*$ denote an absolute constant such that $320(c^*+ \sqrt{2c^*}) \le 0.5$. 
For any given $\delta\in (0,1)$, with probability at least $1-\delta$, Algorithm \ref{alg: genie-aided} outputs an admissible subset of sources $\calS$ as per Definition \ref{def: good subpopulation}.  
\end{theorem}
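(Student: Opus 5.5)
The plan is to argue round by round, using the Bourgain--Tzafriri machinery behind Theorem~\ref{thm: classic BT}, and then glue the rounds together as in the constructive proof of Theorem~\ref{thm: existence of desired subpopulation}. The argument has three steps.

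\textbf{Step 1: one round succeeds with constant probability.} Fix a round $t$ in which the residual $A_t$ still satisfies $\text{st.rank}(A_t)\ge k/(2\tilde c)$. Since at most $t\,s$ columns have been removed, $\|A_t\|\le\|A\|$ while $\|A_t\|_F^2$ stays of order $M$. We apply the random-restriction step from the proof of Theorem~\ref{thm: classic BT} to $A_t$. Draw $\tilde{\calS}_t$ uniformly of size $s=\lceil c^*\,\text{st.rank}(A)\rceil$. Bound $\E\|H_{\tilde{\calS}_t\times\tilde{\calS}_t}\|_{\infty\to 1}$ by decoupling together with the Rudelson--Vershynin type estimate $\E\|R A_t^\top A_t R - \mathrm{diag}\|\lesssim (c^*+\sqrt{2c^*})\|A_t\|^2\cdot(\cdot)$. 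The constant condition $320(c^*+\sqrt{2c^*})\le 0.5$ is designed to make this expectation at most $s/64$. Markov's inequality then gives $\Pr\big(\|H\|_{\infty\to1}\le s/8\big)\ge 1/8$, and the $(8/7)$ base of the inner loop is consistent with this.

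\textbf{Step 2: pruning via Grothendieck factorization.} Grothendieck's factorization writes $H=D_tT_tD_t$ with $\|T_t\|\lesssim \|H\|_{\infty\to1}/s$ and $\sum_j d_{jt}^2=1$. By Markov, at least $s/2$ indices satisfy $d_{jt}^2\le 2/s$. On $\calS_t$ we then get $\|H_{\calS_t\times\calS_t}\|\le (2/s)\|T_t\|\le 1/2$. Hence the singular values of $A_{\calS_t}$ lie in $[1/\sqrt2,\sqrt{3/2}]$, so each block is well conditioned and has size $\Theta(s)$. Running $\log_{8/7}(\lambda_{\min}(AA^\top)/\delta)$ trials, each failing with probability at most $7/8$, makes round $t$ fail with probability at most $\delta/\lambda_{\min}(AA^\top)$. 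A union bound over the at most $\lceil\lambda_{\min}(AA^\top)\rceil$ rounds then gives overall success probability at least $1-\delta$.

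\textbf{Step 3: the union is admissible.} Write $\sum_{i\in\calS}\alpha_i\alpha_i^\top=\sum_t A_{\calS_t}A_{\calS_t}^\top$. Each summand has all nonzero eigenvalues in $[1/2,3/2]$, so the upper eigenvalue bound is $\tfrac32\cdot\#\text{rounds}\cdot(\cdot)$. The lower bound is the delicate part. Here we use that $s=\Theta(\text{st.rank}(A))=\Theta(k)$, because $\|A\|^2\lesssim M/k$ gives $\text{st.rank}(A)\gtrsim k$. So each block of $\Theta(k)$ near-orthonormal columns in $\R^k$ nearly spans $\R^k$ and has $\lambda_{\min}(A_{\calS_t}A_{\calS_t}^\top)\gtrsim$ const. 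Summing over $t^*$ rounds then gives $\kappa=\Theta(1)$ and $|\calS|=\Theta(t^*k)$. Finally, we show $t^*=\Theta(\lambda_{\min}(AA^\top))$. The loop runs at most that many rounds by construction. Conversely, while $t\lesssim\lambda_{\min}(AA^\top)$, the removed columns carry spectral mass at most $\tfrac32 t$ in every direction, so the stable-rank test $\text{st.rank}(A_t)\ge k/(2\tilde c)$ remains true. This yields $|\calS|=\Theta(k\lambda_{\min}(AA^\top))$.

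\textbf{Main obstacle.} The hardest step is ensuring each block has a lower eigenvalue bounded away from zero in all $k$ directions, since $s$ is only a constant fraction of $k$. The Bourgain--Tzafriri bound controls the Gram matrix $A_{\calS_t}^\top A_{\calS_t}$ (size $|\calS_t|$), not $A_{\calS_t}A_{\calS_t}^\top$ on $\R^k$. I would first try to make the sum $\sum_t A_{\calS_t}A_{\calS_t}^\top$ behave like a sum of random well-conditioned pieces: the blocks are drawn from residuals that still have large stable rank, so I would apply a matrix Chernoff argument across rounds to get $\lambda_{\min}\gtrsim t^*\cdot s/k$. If that fails, the fallback is to use the residual stable-rank invariant directly to prevent the chosen blocks from concentrating on a proper subspace.
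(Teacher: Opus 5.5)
\textbf{Verdict: there is a genuine gap in Step 3, and it cannot be closed.} Your Steps 1--2 follow the paper's proof: the expected $\|\cdot\|_{\infty\to1}$ bound, Markov, Grothendieck pruning, $\log_{8/7}$ amplification, and a union bound over rounds. Your Step 3 gluing is also the paper's route: per-block conditioning plus the Weyl bound \eqref{eq: condition union}. The gap is the one you flagged, and it is fatal rather than delicate.

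\textbf{Why each block is singular on $\R^k$.} Since $\text{st.rank}(A)\le\text{rank}(A)\le k$, every block has $|\mathcal{S}_t|\le s\le\lceil c^*k\rceil<k$ for $k\ge 2$ (note $c^*<1.3\times10^{-6}$). Hence $A_{\mathcal{S}_t}A_{\mathcal{S}_t}^\top$ has rank below $k$, and $\lambda_k(A_{\mathcal{S}_t}A_{\mathcal{S}_t}^\top)=0$. Near-orthonormality ($A_{\mathcal{S}_t}^\top A_{\mathcal{S}_t}\succeq\tfrac12 I$, your Step 2) forces $|\mathcal{S}_t|\le k$, while spanning $\R^k$ needs $|\mathcal{S}_t|\ge k$. ``$s=\Theta(k)$'' with constant $c^*$ gives neither. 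So the claim that a block ``nearly spans $\R^k$'' is false, and \eqref{eq: condition union}, which needs $\lambda_k>0$ for every summand, never applies. The paper's proof has exactly this hole: Theorem~\ref{thm: classic BT} is proved for $\kappa(A_{\mathcal{S}}^\top A_{\mathcal{S}})$ but then used as if it bounded $\kappa(A_{\mathcal{S}}A_{\mathcal{S}}^\top)$ on $\R^k$.

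\textbf{Why your repairs cannot work.} Nothing in Algorithm~\ref{alg: genie-aided} looks at directions in $\R^k$. The draw $\tilde{\mathcal{S}}_t$ is uniform. Acceptance and pruning depend only on the Gram matrix of the drawn columns. The stopping test uses only $\|A_t\|_F$ and $\|A_t\|$, which are blind to $\lambda_{\min}$. Matrix Chernoff for uniform column sampling gives at best $A_{\mathcal{S}}A_{\mathcal{S}}^\top\approx\frac{|\mathcal{S}|}{M}AA^\top$, so the output inherits $\kappa(AA^\top)$.

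\textbf{A concrete counterexample.} Place $(1-p)M$ columns i.i.d.\ uniform on the unit sphere of $\mathrm{span}(e_1,\dots,e_{k/2})$, and $pM$ on that of $\mathrm{span}(e_{k/2+1},\dots,e_k)$, with $p\to0$ and $p^2M/k\to\infty$.
\begin{itemize}
\item All hypotheses hold: $\|A\|^2\approx 2(1-p)M/k$ and $\lambda_{\min}(AA^\top)\approx 2pM/k$.
\item The acceptance test passes w.h.p., since $\|H\|_{\infty\to1}\le s\|H\|$ and $\|H\|=O(\sqrt{c^*})\ll 1/8$.
\item Of the $T$ columns drawn in total, about $pT$ lie in the second cluster, while pruning keeps at least $T/2$.
\item Let $n_1,n_2$ count output columns per cluster. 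Comparing the traces on the two coordinate blocks gives $\kappa\ge n_1/n_2\gtrsim(\tfrac12-p)/p\to\infty$.
\end{itemize}

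\textbf{Further failures.} Whenever $\lceil\lambda_{\min}(AA^\top)\rceil\, s<k$ (e.g.\ $\lambda_{\min}(AA^\top)\le 1$, which gives a single round), the output has fewer than $k$ columns and is singular. For $\lambda_{\min}(AA^\top)=o(1)$, no set at all can satisfy Definition~\ref{def: good subpopulation}.

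So with $\kappa$ taken on $\R^k$, as Theorem~\ref{thm: minimax training} requires, the statement fails for the algorithm as written. Any correct version needs either a selection rule that targets the weak eigendirections of $AA^\top$, or a weaker conclusion.

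\textbf{A minor, fixable point.} The stated constant gives $\E\|H\|_{\infty\to1}\le 80s(c^*+\sqrt{2c^*})\le s/8$, not $s/64$. Markov at the test threshold $s/8$ is therefore vacuous. The paper's use of $s/7$ does not match the algorithm's test. Shrinking $c^*$ fixes this.
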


\subsection{Empirical Subpopulation Search} 
\label{sec: empirical search}
Algorithm \ref{alg: genie-aided} relies on (1) the stable rank of the inaccessible matrix $A$ and (2) $\lambda_{\min}\pth{AA^{\top}}$ in determining the size of the randomly selected columns. In practice, $A$ is not given. In this section, we present practical heuristics for algorithm design that circumvent the need for this information.

For simplicity, we assume $n_i = n$, $\forall\, i\in[M]$, with $n$ even. Let $\bar{z_i} = \frac{2}{n_i}\sum_{j=1}^{n_i/2} y_{ij}x_{ij}$, and $\tilde{{z_i}} = \frac{2}{n_i}\sum_{j=n_i/2+1}^{n_i} y_{ij}x_{ij}$. 
Let $Z = \frac{1}{N}\sum_{i=1}^M n_i \bar{z}_i \tilde{z}_i^{\top}$.  
Next, we discuss the connection between matrix $A$, $B^\star A$, $D$, and $Z$.  
We know  
$
\expect{Z} 
= B^\star D(B^\star)^{\top}. 
$
Note that the nonzero eigenvalues of $\expect{Z}$ are identical to those of matrix $D$. Formally, 
\begin{align*}
\lambda_{\min}\pth{D}  
= \lambda_{\min}^+ \pth{B^\star D(B^\star)^{\top}} = \lambda_{\min}^+\pth{\bbE[Z]},  
\end{align*}
where $\lambda_{\min}^+$ denotes the smallest non-zero eigenvalue.  
In addition, the eigenvectors of $\expect{Z}$ are $\sth{B^\star\bm{v}_{\ell}}_{\ell=1}^k$, where $\sth{\bm{v}_\ell}_{\ell=1}^k$ are the eigenvectors of $D$.

Note that if we define the scaled version $\tilde A = \frac{1}{\sqrt{N}}\qth{\sqrt{n}\alpha_1^\star,\cdots,\sqrt{n}\alpha_M^\star}$, we have $D = \tilde A \tilde A^\top$, and 
\begin{align*}
\lambda_{\min}\pth{\tilde A \tilde A^\top}  
= \lambda_{\min}^+\pth{\bbE[Z]}.
\end{align*}
Despite this connection, we are still not able to directly work on $B^\star\tilde A$ as Algorithm \ref{alg: genie-aided} requires the unscaled matrix $A$. Instead, we work with the unscaled version of the matrix $Z$, i.e., $\sum_{i=1}^M \bar{z}_i \tilde{z}_i^{\top}$. Let $\hat Z = \sum_{i=1}^M \bar{z}_i \tilde{z}_i^{\top}.$ Then,
\begin{align*}
    \bbE[\hat Z] = \sum_{i=1}^M \bbE[\bar{z}_i]\bbE[ \tilde{z}_i^{\top}] = B^\star A A^\top (B^\star)^\top.
\end{align*}
By the same logic as in the scaled case, the nonzero eigenvalues of $\bbE[\hat Z]$ are identical to those of the matrix $AA^\top$. Thus, we have 
$\lambda_{\min}\pth{A A^\top}  = \lambda_{\min}^+ \pth{B^\star  A  A^\top(B^\star)^{\top}} = \lambda_{\min}^+\pth{\bbE[\hat Z]}.$ 
In addition, we can prove $\text{st.rank}(B^\star A) = \text{st.rank}(A).$ The proof is deferred to Section \ref{sec: stable rank equivalence}. Then, we can develop an empirical version of Algorithm \ref{alg: genie-aided},
which replaces $A$ by $(\bar Z,\tilde Z)$.
Under standard concentration assumptions on the local sample size $n$,
$\hat Z$ concentrates around its expectation in the operator norm, and
$\mathrm{st.rank}(\bar Z+\tilde Z)$ serves as a consistent proxy for
$\mathrm{st.rank}(A)$.
The resulting empirical procedure is formally stated in
Algorithm \ref{alg: empirical}, which can be found in Appendix \ref{app: empirical alg}.

\section{Extension to Nonlinear Models}
\label{sect: nonlinear models}
In this section, we show that the same source-screening argument extends to nonlinear models, as the screening step depends on the geometry of the local-head matrix rather than on the linearity of the response model.

Consider the nonlinear model
\begin{equation}
\label{model:glm}
    \mathbb E[y_{ij}\mid x_{ij}]
    =
    h_i\!\left((B^\star)^\top x_{ij}\right),
\end{equation}
where \(B^\star\in\mathbb R^{d\times k}\) has orthonormal columns and \(h_i:\mathbb R^k\to\mathbb R\) is a source-specific nonlinear function. Throughout this section, we follow the nonlinear setup of \citet[Section~7]{niu2024collaborative}: the covariates satisfy \(x_{ij}\sim N(0,I_d)\) independently across \(i\in[M]\) and \(j\in[n]\), and the functions \(h_i\) satisfy the regularity assumptions imposed there. For completeness, we provide the assumptions in Appendix~\ref{app:nonlinear}. 

Let \(U\sim N(0,I_k)\). Define the effective local head of source \(i\) as
\(\alpha_i^\star
    :=
    \mathbb E_{U\sim N(0,I_k)}
    \!\left[
        h_i(U)U
    \right]
    \in\mathbb R^k .\)
Lemma~7.1 of \citet{niu2024collaborative} shows that the source-level quantities used by the split-averaging estimator have expectation \(B^\star\alpha_i^\star\). Intuitively, \(\alpha_i^\star\) plays the role of the local head in the nonlinear model. With these effective heads replacing the linear local heads, we reuse the notation
\begin{equation*}
    A=[\alpha_1^\star,\ldots,\alpha_M^\star]\in\mathbb R^{k\times M},
    \qquad
    D=\frac{1}{M}AA^\top .
\end{equation*}
Let \(\lambda_1\ge\cdots\ge\lambda_k>0\) denote the eigenvalues of \(D\). Here \(D\) is the effective source diversity matrix.

As in the linear analysis, we work in the standardized case where
\(\|\alpha_i^\star\|_2=1\) for all \(i\in[M]\), and assume the columns of
\(A\) span \(\mathbb R^k\). A subpopulation \(S\subseteq[M]\) is admissible for the nonlinear model if it satisfies Definition~\ref{def: good subpopulation} with respect to the effective local heads above.

This formulation reduces nonlinear source screening to the same matrix-geometric problem studied in the linear setting. The existence and genie-aided search results depend only on the matrix \(A\), its stable rank, and the conditioning of selected column submatrices; they do not depend on the specific functional form of \(h_i\).

 It remains to connect this geometric reduction to the statistical rate.
Under the assumptions in Appendix~\ref{app:nonlinear}, the nonlinear
split-averaging result of \citet[Theorem~7.1]{niu2024collaborative}
has the same spectral dependence on the diversity matrix formed from the
effective local heads. Since the source-screening argument in
Theorem~\ref{thm: minimax training} depends only on this diversity matrix
and the admissibility of \(S\), the same argument applies after replacing
the linear local heads by the effective local heads. Thus, under the same
conditions as in Theorem~\ref{thm: minimax training}, training on an
admissible subpopulation attains the same minimax-optimal rate.

\begin{figure}[h]
    \centering
    \includegraphics[width=\linewidth]{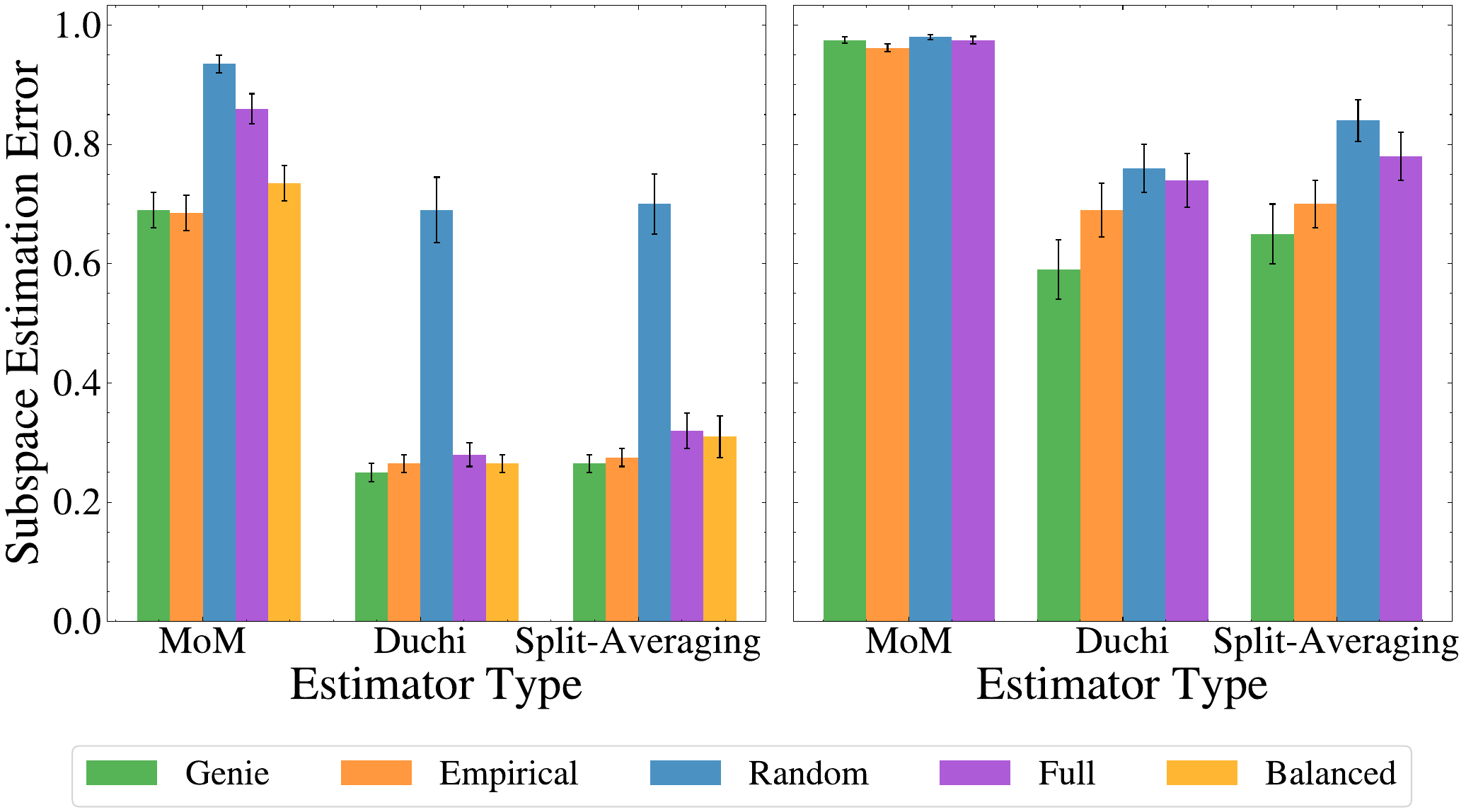}
    \caption{(Left) Performance on Clustered $\alpha_i$ setting. (Right) Performance on Heterogeneous Gaussian $\alpha_i$ setting.}
    \label{fig:placeholder1}
\end{figure}

\section{Numerical Experiments} \label{sect:experiments}

\begin{figure*}[h]
    \centering
    \includegraphics[width=0.87\linewidth, trim={0 2cm 0 0},clip]{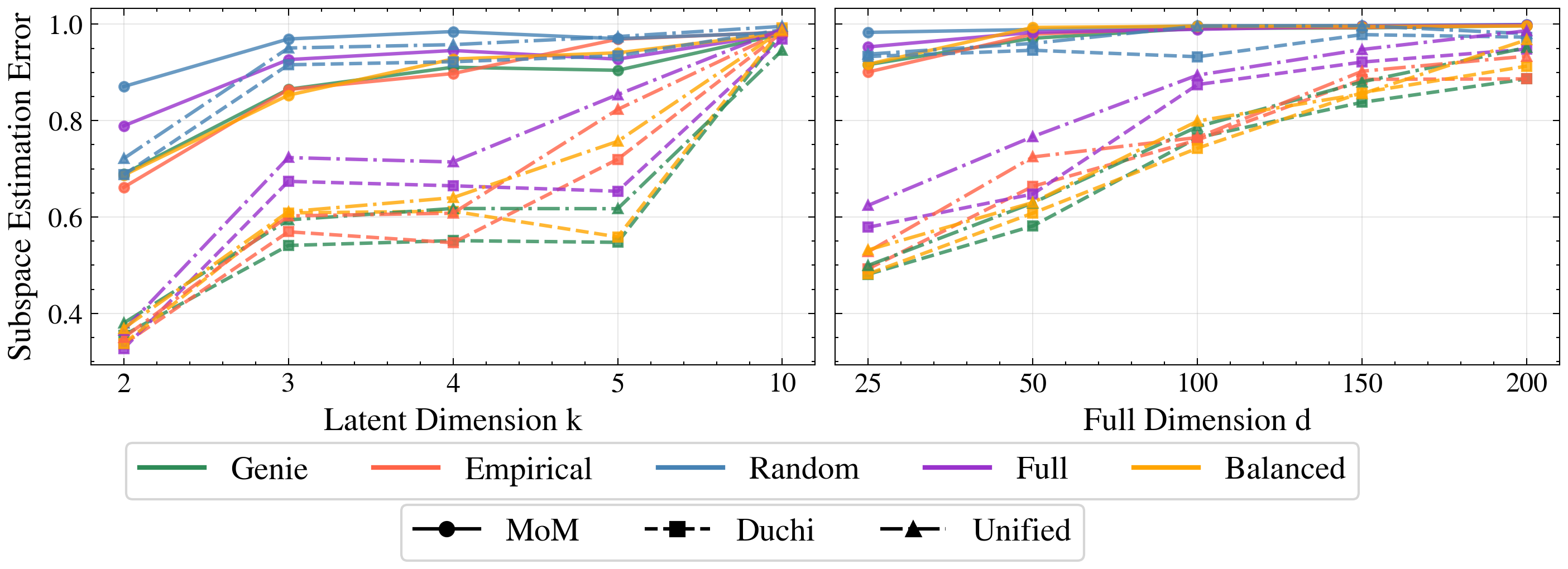}
    \caption{(Left) Ablation over latent dimensionality $k$. (Right) Ablation over full dimensionality. }
    \label{fig:dimensionality}
\end{figure*}
\vspace{-0.3cm}
\begin{figure*}[h]
    \centering
    \includegraphics[width=0.87\linewidth]{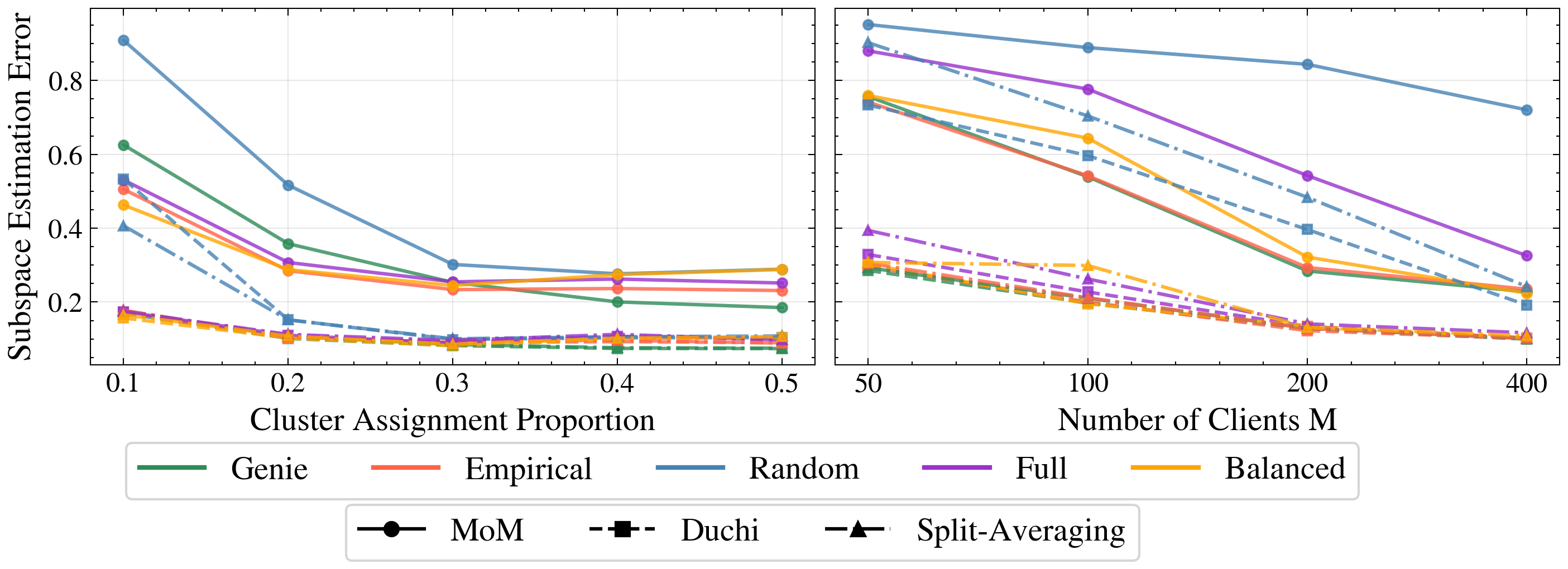}
    \caption{(Left) Ablation over clustered sources assignment proportion. (Right) Ablation over the number of sources $M$. }
    \label{fig:clients}
\end{figure*}

We validate our proposed data source pre-screening method through experiments on synthetic regression tasks and real-world classification benchmarks. While our main results study the linear representation learning case, we provide additional ablations, such as nonlinear feature extractors, in Appendix \ref{app:experiments}. Overall, these evaluations demonstrate the resilience of our approach across diverse data regimes (e.g., dimensionality, correlation structures) and system scales.

We compare our method against several baselines: \textit{full population training} (no screening), \textit{random subsampling}, and \textit{power-of-choice} selection \cite{cho2020client} (applied after one initial round of full population training). To ensure parity, we constrain all baselines to select the same number of sources as our screening procedure. In the clustered synthetic setting, we additionally evaluate a \textit{balanced} baseline that subsamples sources using ground-truth cluster assignments to guarantee equal group representation. For all algorithms, downstream estimator performance is averaged over 20 random seeds.

\subsection{Synthetic Data}
In the synthetic setting, we evaluate our genie-aided (\ref{alg: genie-aided}) and empirical (\ref{alg: empirical}) algorithms within a linear regression framework. After performing source selection, we measure the performance of the resulting subpopulation using three representative estimators: \textit{Median-of-Means} (MoM) \cite{tripuraneni2021provable}, which offers algorithmic simplicity while reaching sharp statistical limits under moderate heterogeneity; \textit{DFHT} \cite{duchi2022subspace}, with strong empirical performance and robust theoretical guarantees; and the \textit{split-averaging estimator} \cite{niu2024collaborative}, which is provably minimax optimal in balanced settings. 
Following existing literature, our metric in this setting is the principal angle distance between the joint subspace estimate $B$ and the underlying truth $B^\star$: 
$\|\sin\Theta(B^\star, B)\|$.  

\noindent{\bf Synthetic Data Generation.} %
We evaluate our proposed methods using synthetic distributed data generated under a source-specific linear model: $y_{ij} = x_{ij}^\top B^\star\alpha_i^\star + \epsilon_{ij}$, where $B^\star$ represents a shared subspace and $\alpha_i^\star$ are source-specific coefficients. To necessitate robust collaborative learning, local sample sizes $n_i$ are deliberately restricted ($n_i \le d$) so that purely local subspace recovery is insufficient. 

To assess the robustness of our subspace estimation, we consider two distinct regimes for the coefficients $\alpha_i^\star$:
\begin{itemize}
    \item \textbf{Clustered Coefficients:} Sources are probabilistically assigned to one of two groups, with each group occupying a disparate half of the latent subspace. Specifically, a source belongs to the first group with probability $g$ (default $g=0.2$), where its coefficients have non-zero variance only on the first $k/2$ dimensions.
    \item \textbf{Heterogeneous Gaussian:} Coefficients are drawn from zero-mean Gaussian distributions $\alpha_i \sim \mathcal{N}(0, \Psi_i)$, where each $\Psi_i$ is a randomly generated positive semi-definite (PSD) covariance matrix. 
\end{itemize}

Full details regarding data generation, standard parameter values (e.g., $M$, $d$, $k$), and covariance construction are provided in Appendix \ref{app:synthetic_data}.

\noindent{\bf Results.}
The results for subspace estimation under the clustered and heterogeneous Gaussian regimes are illustrated in Fig.~\ref{fig:placeholder1}. In both settings, our empirical algorithm (Algorithm \ref{alg: empirical}) consistently achieves a lower subspace reconstruction error than using the full source population. 
This gap is particularly pronounced in the clustered setting, suggesting that our selection mechanism effectively identifies the most informative sources for the shared basis even when population heads are sparse.
Furthermore, the performance of the balanced and genie-aided algorithms indicates the potential performance under ideal prescreening conditions. 

In Fig.~\ref{fig:dimensionality}, we examine the impact of problem dimensionality on subspace recovery. While increasing $d$ and $k$ inherently raises the task complexity, our proposed methods strictly outperform training on the full source population. Notably, the genie-aided algorithm exhibits superior resilience to increases in the latent rank $k$ compared to existing estimators. 
In Fig.~\ref{fig:clients}, we examine the effects of the source count and distribution. Our method is able to identify subpopulations superior to full-population training even when $M$ is small, with greater benefits in more imbalanced settings.

\subsection{Real-world Data.} 
\label{sec:real_world_benchmarks} 
To assess practical applicability, we evaluate our method on the ACSIncome \cite{ding2021retiring} and CelebA \cite{liu2015faceattributes} datasets. These benchmarks provide natural data partitions that reflect real-world distribution shifts and allow us to test our method across regimes of varying complexity. For all tasks, we adopt \texttt{FedRep} \cite{collins2021exploiting} as the base estimator, run our algorithm with varying assumed values of dimension $k$, and report the final classification accuracy averaged over the entire source population.

\noindent{\bf ACSIncome.} The ACSIncome dataset is used to predict whether an individual's annual income exceeds \$50,000. We partition the approximately 1.6 million low-dimensional samples ($d=10$) geographically by state to simulate natural heterogeneity. 

\noindent{\bf CelebA.} We utilize the CelebA dataset for smile classification. Following the LEAF benchmark \cite{caldas2018leaf}, we select a 10\% subset partitioned by individual identity. Given the high dimensionality of the images, we employ a pre-trained Vision Transformer (ViT-Tiny) \cite{dosovitskiy2020image} to extract a lower-dimension signal ($d=192$) prior to subset selection and fine-tuning.

Additional details on the dataset configuration and preprocessing are provided in Appendix \ref{app:real_world}.

\noindent{\bf Results.}
Table~\ref{tab:real_data_results} summarizes the classification performance on real-world datasets. Our method demonstrates a clear advantage over both uniform and active sampling baselines. 

\begin{table}[ht]
\centering
\begin{tabular}{lll}
\hline
                 & ACSIncome (\%) & CelebA (\%) \\
\hline
Full Population  & 72.8           & 89.5        \\
Random Selection & 71.2           & 88.3        \\
Power-of-Choice  & 73.0           & 89.8        \\
Ours ($k=3$)     & 73.4           & 90.0        \\
Ours ($k=5$)     & 73.9           & 90.3        \\
Ours ($k=7$)     & 74.2           & 90.5        \\
\hline
\end{tabular}
\caption{Binary Classification Accuracy on Income and CelebA.}
\label{tab:real_data_results}
\end{table}%

\section{Conclusion}
We studied the problem of learning shared linear subspaces from heterogeneous data sources.
Our theoretical analysis establishes that, under mild regularity conditions, a balanced subcollection of sources can achieve minimax optimal rates and outperform full data utilization. 
Our experiments confirm that our method consistently improves estimation accuracy %
across both linear and non-linear regimes. A promising direction for future work involves extending our theoretical analysis in empirical settings.

\newpage 
\section*{Impact Statement}
We do not anticipate direct negative societal or ethical consequences. While the practical impact of data pre-screening depends on the application context, our results suggest that thoughtful source selection can, in fact, promote more equitable learning outcomes by counteracting imbalance rather than reinforcing it. There are many other potential societal consequences of our work, none which we feel must be specifically highlighted here.

\bibliographystyle{icml2026} 
\bibliography{main}  

\begin{thebibliography}{53}
\providecommand{\natexlab}[1]{#1}
\providecommand{\url}[1]{\texttt{#1}}
\expandafter\ifx\csname urlstyle\endcsname\relax
  \providecommand{\doi}[1]{doi: #1}\else
  \providecommand{\doi}{doi: \begingroup \urlstyle{rm}\Url}\fi

\bibitem[Ando et~al.(2005)Ando, Zhang, and Bartlett]{ando2005framework}
Ando, R.~K., Zhang, T., and Bartlett, P.
\newblock A framework for learning predictive structures from multiple tasks and unlabeled data.
\newblock \emph{Journal of machine learning research}, 6\penalty0 (11), 2005.

\bibitem[Bengio et~al.(2013)Bengio, Courville, and Vincent]{bengio2013representation}
Bengio, Y., Courville, A., and Vincent, P.
\newblock Representation learning: A review and new perspectives.
\newblock \emph{IEEE transactions on pattern analysis and machine intelligence}, 35\penalty0 (8):\penalty0 1798--1828, 2013.

\bibitem[Bhatia(2013)]{bhatia2013matrix}
Bhatia, R.
\newblock \emph{Matrix analysis}, volume 169.
\newblock Springer Science \& Business Media, 2013.

\bibitem[Bommasani(2021)]{bommasani2021opportunities}
Bommasani, R.
\newblock On the opportunities and risks of foundation models.
\newblock \emph{arXiv preprint arXiv:2108.07258}, 2021.

\bibitem[Bourgain \& Tzafriri(1987)Bourgain and Tzafriri]{bourgain1987invertibility}
Bourgain, J. and Tzafriri, L.
\newblock Invertibility of ‘large’submatrices with applications to the geometry of banach spaces and harmonic analysis.
\newblock \emph{Israel journal of mathematics}, 57\penalty0 (2):\penalty0 137--224, 1987.

\bibitem[Caldas et~al.(2018)Caldas, Duddu, Wu, Li, Kone{\v{c}}n{\`y}, McMahan, Smith, and Talwalkar]{caldas2018leaf}
Caldas, S., Duddu, S. M.~K., Wu, P., Li, T., Kone{\v{c}}n{\`y}, J., McMahan, H.~B., Smith, V., and Talwalkar, A.
\newblock Leaf: A benchmark for federated settings.
\newblock \emph{arXiv preprint arXiv:1812.01097}, 2018.

\bibitem[Caruana(1997)]{caruana1997multitask}
Caruana, R.
\newblock Multitask learning.
\newblock \emph{Machine learning}, 28:\penalty0 41--75, 1997.

\bibitem[Chen et~al.(2020)Chen, Horvath, and Richtarik]{chen2020optimal}
Chen, W., Horvath, S., and Richtarik, P.
\newblock Optimal client sampling for federated learning.
\newblock \emph{arXiv preprint arXiv:2010.13723}, 2020.

\bibitem[Cho et~al.(2020)Cho, Wang, and Joshi]{cho2020client}
Cho, Y.~J., Wang, J., and Joshi, G.
\newblock Client selection in federated learning: Convergence analysis and power-of-choice selection strategies.
\newblock \emph{arXiv preprint arXiv:2010.01243}, 2020.

\bibitem[Chua et~al.(2021)Chua, Lei, and Lee]{chua2021fine}
Chua, K., Lei, Q., and Lee, J.~D.
\newblock How fine-tuning allows for effective meta-learning.
\newblock \emph{Advances in Neural Information Processing Systems}, 34:\penalty0 8871--8884, 2021.

\bibitem[Collins et~al.(2021)Collins, Hassani, Mokhtari, and Shakkottai]{collins2021exploiting}
Collins, L., Hassani, H., Mokhtari, A., and Shakkottai, S.
\newblock Exploiting shared representations for personalized federated learning.
\newblock In \emph{International Conference on Machine Learning}, pp.\  2089--2099. PMLR, 2021.

\bibitem[Collins et~al.(2022)Collins, Hassani, Mokhtari, and Shakkottai]{collins2022fedavg}
Collins, L., Hassani, H., Mokhtari, A., and Shakkottai, S.
\newblock Fedavg with fine tuning: Local updates lead to representation learning.
\newblock \emph{Advances in Neural Information Processing Systems}, 35:\penalty0 10572--10586, 2022.

\bibitem[Crawshaw(2020)]{crawshaw2020multi}
Crawshaw, M.
\newblock Multi-task learning with deep neural networks: A survey.
\newblock \emph{arXiv preprint arXiv:2009.09796}, 2020.

\bibitem[Ding et~al.(2021)Ding, Hardt, Miller, and Schmidt]{ding2021retiring}
Ding, F., Hardt, M., Miller, J., and Schmidt, L.
\newblock Retiring adult: New datasets for fair machine learning.
\newblock \emph{Advances in neural information processing systems}, 34:\penalty0 6478--6490, 2021.

\bibitem[Dosovitskiy(2020)]{dosovitskiy2020image}
Dosovitskiy, A.
\newblock An image is worth 16x16 words: Transformers for image recognition at scale.
\newblock \emph{arXiv preprint arXiv:2010.11929}, 2020.

\bibitem[Du et~al.(2024)Du, Lee, Li, Jiang, Guo, Yu, Liu, Goh, Tang, He, et~al.]{du2024parameter}
Du, G., Lee, J., Li, J., Jiang, R., Guo, Y., Yu, S., Liu, H., Goh, S.~K., Tang, H.-K., He, D., et~al.
\newblock Parameter competition balancing for model merging.
\newblock \emph{Advances in Neural Information Processing Systems}, 37:\penalty0 84746--84776, 2024.

\bibitem[Du et~al.(2021)Du, Hu, Kakade, Lee, and Lei]{du2021fewshot}
Du, S.~S., Hu, W., Kakade, S.~M., Lee, J.~D., and Lei, Q.
\newblock Few-shot learning via learning the representation, provably.
\newblock In \emph{International Conference on Learning Representations}, 2021.

\bibitem[Duan \& Wang(2023)Duan and Wang]{duan2023adaptive}
Duan, Y. and Wang, K.
\newblock Adaptive and robust multi-task learning.
\newblock \emph{The Annals of Statistics}, 51\penalty0 (5):\penalty0 2015--2039, 2023.

\bibitem[Duchi et~al.(2022)Duchi, Feldman, Hu, and Talwar]{duchi2022subspace}
Duchi, J.~C., Feldman, V., Hu, L., and Talwar, K.
\newblock Subspace recovery from heterogeneous data with non-isotropic noise.
\newblock \emph{Advances in Neural Information Processing Systems}, 35:\penalty0 5854--5866, 2022.

\bibitem[Evgeniou \& Pontil(2004)Evgeniou and Pontil]{evgeniou2004regularized}
Evgeniou, T. and Pontil, M.
\newblock Regularized multi--task learning.
\newblock In \emph{Proceedings of the tenth ACM SIGKDD international conference on Knowledge discovery and data mining}, pp.\  109--117, 2004.

\bibitem[Fallah et~al.(2020)Fallah, Mokhtari, and Ozdaglar]{fallah2020personalized}
Fallah, A., Mokhtari, A., and Ozdaglar, A.
\newblock Personalized federated learning with theoretical guarantees: A model-agnostic meta-learning approach.
\newblock \emph{Advances in neural information processing systems}, 33:\penalty0 3557--3568, 2020.

\bibitem[Fifty et~al.(2021)Fifty, Amid, Zhao, Yu, Anil, and Finn]{fifty2021efficiently}
Fifty, C., Amid, E., Zhao, Z., Yu, T., Anil, R., and Finn, C.
\newblock Efficiently identifying task groupings for multi-task learning.
\newblock \emph{Advances in Neural Information Processing Systems}, 34:\penalty0 27503--27516, 2021.

\bibitem[Finn et~al.(2017)Finn, Abbeel, and Levine]{finn2017model}
Finn, C., Abbeel, P., and Levine, S.
\newblock Model-agnostic meta-learning for fast adaptation of deep networks.
\newblock In \emph{International conference on machine learning}, pp.\  1126--1135. PMLR, 2017.

\bibitem[Fraboni et~al.(2021)Fraboni, Vidal, Kameni, and Lorenzi]{fraboni2021clustered}
Fraboni, Y., Vidal, R., Kameni, L., and Lorenzi, M.
\newblock Clustered sampling: Low-variance and improved representativity for clients selection in federated learning.
\newblock In \emph{International Conference on Machine Learning}, pp.\  3407--3416. PMLR, 2021.

\bibitem[Kairouz et~al.(2021)Kairouz, McMahan, Avent, Bellet, Bennis, Bhagoji, Bonawitz, Charles, Cormode, Cummings, et~al.]{kairouz2021advances}
Kairouz, P., McMahan, H.~B., Avent, B., Bellet, A., Bennis, M., Bhagoji, A.~N., Bonawitz, K., Charles, Z., Cormode, G., Cummings, R., et~al.
\newblock Advances and open problems in federated learning.
\newblock \emph{Foundations and trends{\textregistered} in machine learning}, 14\penalty0 (1--2):\penalty0 1--210, 2021.

\bibitem[Karimireddy et~al.(2019)Karimireddy, Kale, Mohri, Reddi, Stich, and Suresh]{karimireddy2019scaffold}
Karimireddy, S.~P., Kale, S., Mohri, M., Reddi, S.~J., Stich, S.~U., and Suresh, A.~T.
\newblock Scaffold: Stochastic controlled averaging for on-device federated learning.
\newblock \emph{arXiv preprint arXiv:1910.06378}, 2\penalty0 (6), 2019.

\bibitem[LeCun et~al.(2015)LeCun, Bengio, and Hinton]{lecun2015deep}
LeCun, Y., Bengio, Y., and Hinton, G.
\newblock Deep learning.
\newblock \emph{nature}, 521\penalty0 (7553):\penalty0 436--444, 2015.

\bibitem[Lee et~al.(2016)Lee, Yang, and Hwang]{lee2016asymmetric}
Lee, G., Yang, E., and Hwang, S.
\newblock Asymmetric multi-task learning based on task relatedness and loss.
\newblock In \emph{International conference on machine learning}, pp.\  230--238. PMLR, 2016.

\bibitem[Li et~al.(2020)Li, Sahu, Zaheer, Sanjabi, Talwalkar, and Smith]{li2020federated}
Li, T., Sahu, A.~K., Zaheer, M., Sanjabi, M., Talwalkar, A., and Smith, V.
\newblock Federated optimization in heterogeneous networks.
\newblock \emph{Proceedings of Machine learning and systems}, 2:\penalty0 429--450, 2020.

\bibitem[Li et~al.(2019)Li, Huang, Yang, Wang, and Zhang]{li2019convergence}
Li, X., Huang, K., Yang, W., Wang, S., and Zhang, Z.
\newblock On the convergence of fedavg on non-iid data.
\newblock \emph{arXiv preprint arXiv:1907.02189}, 2019.

\bibitem[Liu et~al.(2015)Liu, Luo, Wang, and Tang]{liu2015faceattributes}
Liu, Z., Luo, P., Wang, X., and Tang, X.
\newblock Deep learning face attributes in the wild.
\newblock In \emph{Proceedings of International Conference on Computer Vision (ICCV)}, December 2015.

\bibitem[Luo et~al.(2022)Luo, Xiao, Wang, Huang, and Tassiulas]{luo2022tackling}
Luo, B., Xiao, W., Wang, S., Huang, J., and Tassiulas, L.
\newblock Tackling system and statistical heterogeneity for federated learning with adaptive client sampling.
\newblock In \emph{IEEE INFOCOM 2022-IEEE conference on computer communications}, pp.\  1739--1748. IEEE, 2022.

\bibitem[McMahan et~al.(2017)McMahan, Moore, Ramage, Hampson, and y~Arcas]{mcmahan2017communication}
McMahan, B., Moore, E., Ramage, D., Hampson, S., and y~Arcas, B.~A.
\newblock Communication-efficient learning of deep networks from decentralized data.
\newblock In \emph{Artificial intelligence and statistics}, pp.\  1273--1282. PMLR, 2017.

\bibitem[Niu et~al.(2024)Niu, Su, Xu, and Yang]{niu2024collaborative}
Niu, X., Su, L., Xu, J., and Yang, P.
\newblock Collaborative learning with shared linear representations: Statistical rates and optimal algorithms.
\newblock \emph{arXiv preprint arXiv:2409.04919}, 2024.

\bibitem[Pan \& Yang(2009)Pan and Yang]{pan2009survey}
Pan, S.~J. and Yang, Q.
\newblock A survey on transfer learning.
\newblock \emph{IEEE Transactions on knowledge and data engineering}, 22\penalty0 (10):\penalty0 1345--1359, 2009.

\bibitem[Pisier et~al.(1986)]{pisier1986factorization}
Pisier, G. et~al.
\newblock \emph{Factorization of linear operators and geometry of Banach spaces}, volume~60.
\newblock American Mathematical Soc., 1986.

\bibitem[Qu et~al.(2021)Qu, Lin, Li, and Zhou]{qu2021federated}
Qu, Z., Lin, K., Li, Z., and Zhou, J.
\newblock Federated learning’s blessing: Fedavg has linear speedup.
\newblock In \emph{ICLR 2021-Workshop on Distributed and Private Machine Learning (DPML)}, 2021.

\bibitem[Ruan et~al.(2024)Ruan, Zhang, and Joe-Wong]{10597379}
Ruan, Y., Zhang, X., and Joe-Wong, C.
\newblock How valuable is your data? optimizing client recruitment in federated learning.
\newblock \emph{IEEE/ACM Transactions on Networking}, 32\penalty0 (5):\penalty0 4207--4221, 2024.
\newblock \doi{10.1109/TNET.2024.3422264}.

\bibitem[Rudelson \& Vershynin(2007)Rudelson and Vershynin]{rudelson2007sampling}
Rudelson, M. and Vershynin, R.
\newblock Sampling from large matrices: An approach through geometric functional analysis.
\newblock \emph{Journal of the ACM (JACM)}, 54\penalty0 (4):\penalty0 21--es, 2007.

\bibitem[Rusu et~al.(2016)Rusu, Rabinowitz, Desjardins, Soyer, Kirkpatrick, Kavukcuoglu, Pascanu, and Hadsell]{rusu2016progressive}
Rusu, A.~A., Rabinowitz, N.~C., Desjardins, G., Soyer, H., Kirkpatrick, J., Kavukcuoglu, K., Pascanu, R., and Hadsell, R.
\newblock Progressive neural networks.
\newblock \emph{arXiv preprint arXiv:1606.04671}, 2016.

\bibitem[Standley et~al.(2020)Standley, Zamir, Chen, Guibas, Malik, and Savarese]{standley2020tasks}
Standley, T., Zamir, A., Chen, D., Guibas, L., Malik, J., and Savarese, S.
\newblock Which tasks should be learned together in multi-task learning?
\newblock In \emph{International conference on machine learning}, pp.\  9120--9132. PMLR, 2020.

\bibitem[Thaker et~al.(2023)Thaker, Setlur, Wu, and Smith]{thaker2023leveraging}
Thaker, P., Setlur, A., Wu, S., and Smith, V.
\newblock On the benefits of public representations for private transfer learning under distribution shift.
\newblock In \emph{The Thirty-eighth Annual Conference on Neural Information Processing Systems}, 2023.

\bibitem[Thekumparampil et~al.(2021)Thekumparampil, Jain, Netrapalli, and Oh]{thekumparampil2021statistically}
Thekumparampil, K.~K., Jain, P., Netrapalli, P., and Oh, S.
\newblock Statistically and computationally efficient linear meta-representation learning.
\newblock \emph{Advances in Neural Information Processing Systems}, 34:\penalty0 18487--18500, 2021.

\bibitem[Tian et~al.(2025)Tian, Gu, and Feng]{tian2023learning}
Tian, Y., Gu, Y., and Feng, Y.
\newblock Learning from similar linear representations: Adaptivity, minimaxity, and robustness.
\newblock \emph{Journal of Machine Learning Research}, 26\penalty0 (187):\penalty0 1--125, 2025.

\bibitem[Tripuraneni et~al.(2020)Tripuraneni, Jordan, and Jin]{tripuraneni2020theory}
Tripuraneni, N., Jordan, M., and Jin, C.
\newblock On the theory of transfer learning: The importance of task diversity.
\newblock \emph{Advances in neural information processing systems}, 33:\penalty0 7852--7862, 2020.

\bibitem[Tripuraneni et~al.(2021)Tripuraneni, Jin, and Jordan]{tripuraneni2021provable}
Tripuraneni, N., Jin, C., and Jordan, M.
\newblock Provable meta-learning of linear representations.
\newblock In \emph{International Conference on Machine Learning}, pp.\  10434--10443. PMLR, 2021.

\bibitem[Tropp(2009)]{tropp2009column}
Tropp, J.~A.
\newblock Column subset selection, matrix factorization, and eigenvalue optimization.
\newblock In \emph{Proceedings of the twentieth annual ACM-SIAM symposium on Discrete algorithms}, pp.\  978--986. SIAM, 2009.

\bibitem[Xiang et~al.(2024)Xiang, Ioannidis, Yeh, Joe-Wong, and Su]{xiang2024efficient}
Xiang, M., Ioannidis, S., Yeh, E., Joe-Wong, C., and Su, L.
\newblock Efficient federated learning against heterogeneous and non-stationary client unavailability.
\newblock \emph{Advances in Neural Information Processing Systems}, 37:\penalty0 104281--104328, 2024.

\bibitem[Yang et~al.(2025)Yang, Zhang, Wu, R{\'e}, and Su]{yang2025precise}
Yang, F., Zhang, H.~R., Wu, S., R{\'e}, C., and Su, W.~J.
\newblock Precise high-dimensional asymptotics for quantifying heterogeneous transfers.
\newblock \emph{Journal of Machine Learning Research}, 26\penalty0 (113):\penalty0 1--88, 2025.

\bibitem[Yang et~al.(2021)Yang, Fang, and Liu]{yang2021achieving}
Yang, H., Fang, M., and Liu, J.
\newblock Achieving linear speedup with partial worker participation in non-iid federated learning.
\newblock \emph{arXiv preprint arXiv:2101.11203}, 2021.

\bibitem[Zhang et~al.(2024)Zhang, Toso, Anderson, and Matni]{zhang2024sample}
Zhang, T.~T., Toso, L.~F., Anderson, J., and Matni, N.
\newblock Sample-efficient linear representation learning from non-iid non-isotropic data.
\newblock In \emph{The Twelfth International Conference on Learning Representations}, 2024.

\bibitem[Zhang et~al.(2022)Zhang, Deng, Zhang, and Wu]{zhang2022survey}
Zhang, W., Deng, L., Zhang, L., and Wu, D.
\newblock A survey on negative transfer.
\newblock \emph{IEEE/CAA Journal of Automatica Sinica}, 10\penalty0 (2):\penalty0 305--329, 2022.

\bibitem[Zhang et~al.(2023)Zhang, Li, Shi, Chen, Qiao, Zhang, Wu, and Dong]{zhang2023real}
Zhang, W., Li, X., Shi, G., Chen, X., Qiao, Y., Zhang, X., Wu, X.-M., and Dong, C.
\newblock Real-world image super-resolution as multi-task learning.
\newblock \emph{Advances in Neural Information Processing Systems}, 36:\penalty0 21003--21022, 2023.

\end{thebibliography}

\appendix 
\onecolumn
\begin{center}
\Large 
    Appendices   
\end{center}

\section{Related Work on Source Selection in Federated Learning}
\label{app: related work: add}
In federated learning, a parameter learner focuses on training a global model or personalized model via minimizing an objective of the form in Eq.\,(\ref{eq: objective rewritten}). 
Eq.\,(\ref{eq: objective rewritten}) itself is a classic personalized federated learning formulation. 
When a global model is trained, $\Phi = \{\calI\}$ is the identity mapping, and local heads are forced to be the same across sources, i.e.,  $h_i = h_{i^{\prime}}$ for $i, i^{\prime}$ \cite{kairouz2021advances}. 

Partial source participation has been widely adopted in FL to mitigate communication costs. The selection scheme has been developed in recent years.
\cite{li2019convergence}, \cite{yang2021achieving}, \cite{karimireddy2019scaffold}, \cite{qu2021federated} treat sources equally important, and the sources are sampled uniformly at random. \cite{li2020federated} selects sources according to the ratio of their local data volume and the total data volume. However, when the data are heterogeneous, these sampling schemes suffer from slow convergence due to high variance. \cite{luo2022tackling} proposes an adaptive source sampling algorithm that tackles heterogeneity to minimize convergence time. \cite{chen2020optimal} utilizes importance sampling and proposes an adaptive source sampling strategy to reduce communication bandwidth. \cite{fraboni2021clustered} introduces clustered sampling based on sources' sample size or model similarity, which consistently leads to better convergence performance. \cite{cho2020client} selects sources with higher training loss, resulting in an increase in convergence rate. \cite{10597379} builds a comprehensive system for evaluating the source selection strategy based on quantitative metrics, including training loss, generalization error, population representativeness, completion time, and differential privacy. \cite{xiang2024efficient} studies non-stationary source availability and proposes a novel algorithm that compensates for missed information due to source unavailability, although the non-stationary dynamics are unknown.

\section{Standard Assumptions and Statistical Rates of Existing Work}
\label{app: statistical rate big table}
We impose the following assumptions on the noise variables $\xi_{ij}$, the covariate vectors $x_{ij}$, and the parameters $\theta_i^\star$. %
\begin{assumption}[Sub-Gaussian noises]\label{ass:sub-gaussian-noise}
    The noise variables $\xi_{ij}$ are independent, %
    zero-mean, sub-Gaussian\footnote{A random variable $\xi\in\reals$ is sub-Gaussian with variance proxy $\sigma^2$, denoted by $\xi\sim \text{subG}(\sigma^2)$, if %
    $\mathbb{E}[\exp(t(\xi-\mathbb{E}\xi))]\le \exp(\sigma^2t^2/2)$ for any $t\in\reals$. A random vector $\xi\in\reals^d$ is sub-Gaussian with variance proxy $\sigma^2$, denoted by $\xi\sim \text{subG}_d(\sigma^2)$, if $u^\intercal \xi\sim \text{subG}(\sigma^2)$ for any $u\in \mathbb{S}^{d-1}$.}  with constant variance proxy $\sigma^2=\Theta(1)$ and are independent of covariates %
    $x_{ij}$. 
    \end{assumption}

\begin{assumption}[Sub-Gaussian covariates]\label{ass:covariates}
    The covariates $x_{ij}$ are independent, zero-mean, 
    sub-Gaussian  with variance proxy $\gamma^2=\Theta(1)$. For each $i$,  $x_{ij}$ share the same but \emph{unknown} covariance, i.e., $\mathbb{E}[x_{ij}x_{ij}^\intercal]
    =\Gamma_i$ for all $j$.
     These covariance matrices are well-conditioned, with $\lambda_1(\Gamma_i)/\lambda_d(\Gamma_i) =\Theta(1)$ for all $i$. 
    \end{assumption}
The sub-Gaussian assumptions are standard in statistical learning for deriving tail bounds. Assumption \ref{ass:covariates} generalizes those in \cite{tripuraneni2021provable,duchi2022subspace} by allowing non-identity covariance.

\begin{assumption}[Source %
normalization]\label{ass:client-diverse} 
Each $\alpha_i^\star$ satisfies $\|\alpha_i^\star\| = O(1)$ for $i\in[M]$.
\end{assumption}

The normalization assumption is standard in the literature. Recall that $\lambda_r = \lambda_r(D)$ is the $r$-th largest eigenvalue of the source diversity matrix $D$ defined in \eqref{eq:diversity-matrix} for $r\in[k]$. The normalization then gives $\sum_{r=1}^k \lambda_r= \text{trace}(D) = \text{trace}(\sum_{i=1}^M n \alpha_i^\star(\alpha_i^\star)^\intercal)/N = \sum_{i=1}^M n \|\alpha_i^\star\|^2/N = O(1)$, which further implies that $k\lambda_k\le O(1)$ and $\lambda_1=O(1)$.

\setlength{\tabcolsep}{3.3pt}
\begin{table}[!ht]
    \centering
    {\small
    \begin{tabular}{lcccccc}
    \hline
    Reference & Method &  General cases  & Well-represented  \\ \hline
    \emph{Upper Bound:} & & & & & & \\
   \cite{du2021fewshot} &  ERM &- & $O(\sqrt{\frac{dk^2}{N}})$  \\ 
   \cite{tripuraneni2021provable} & MoM & $O\Big(\sqrt{\frac{d}{N\lambda_k^2}}\Big)$ & $O\Big(\sqrt{\frac{dk^2}{N}}\Big)$  \\ 
   \cite{duchi2022subspace} & Spectral &$O\Big(\sqrt{\frac{d}{N\lambda_k^2}}\Big)$ & $O\Big(\sqrt{\frac{dk^2}{N}}\Big)$  \\ 
   \cite{collins2021exploiting} &AltMin & - & $O\Big(\sqrt{\frac{dk^2}{N}}\Big)$  \\ 
   \cite{thekumparampil2021statistically} & AltMin & $O\Big(\sqrt{\frac{dk\lambda_1}{N\lambda_k^2}}\Big)$ & $O\Big(\sqrt{\frac{dk^2}{N}}\Big)$  \\ 
   \cite{zhang2024sample} & AltMin& - & $O\Big(\sqrt{\frac{dk}{N}}\Big)$      \\
   \cite{chua2021fine} & AdaptRep & - & $O\Big(\sqrt{\frac{dk}{N}}\Big)$     \\
   \cite{duan2023adaptive} & ARMUL & - & $O\Big(\sqrt{\frac{dk^2}{N}}\Big)$      \\ 
   \cite{tian2023learning} & Spectral &  - & $O\Big(\sqrt{\frac{dk}{N}}\Big)$  \\ 
   \cite{niu2024collaborative} &Spectral & $O\Big(\sqrt{\frac{d\lambda_1}{N\lambda_k^2}} + \sqrt{\frac{Md}{N^2\lambda_k^2}}\Big)$ & {$\Theta\Big(\sqrt{\frac{dk}{N}} + \sqrt{\frac{Mdk^2}{N^2}}\Big)$}  \\ \hline
   \emph{Lower Bound:} & & & & & & \\
   \cite{tripuraneni2021provable} & -& $\Omega\Big(\sqrt{\frac{1}{N\lambda_k}} + \sqrt{\frac{dk}{N}}\Big)$ & $\Omega\Big(\sqrt{\frac{dk}{N}}\Big)$  \\ 
   \cite{niu2024collaborative} & -& $\Omega\Big(\sqrt{\frac{d}{N\lambda_k}} + \sqrt{\frac{Md}{N^2\lambda_k^2}}\Big)$  \\ \hline
    \end{tabular}}
    \caption{A summary of the high-probability statistical rate for estimating $B^\star$. Here ``general cases'' refer to cases with a general source diversity matrix $D$, while ``well-represented'' cases assume $\lambda_1 = \Theta(\lambda_k) = \Theta(1/k)$. The abbreviations for the methods are as follows: ``ERM'': empirical risk minimization; ``MoM'': method-of-moments estimator; ``AltMin'': alternating minimization algorithm initialized from MoM; ``AdaptRep'': adaptive representation learning; ``ARMUL'': adaptive and robust multi-task learning.}
    \label{tab:example1}
\end{table}

\section{Assumptions for Nonlinear Models}
\label{app:nonlinear}

\begin{assumption}\label{assp:iid-x}
The covariates $x_{ij}$ are sampled i.i.d. from $N(0, I_d)$ for all $i$ and $j$.   
\end{assumption}

\begin{assumption}
\label{assp:regularity}
In the model \eqref{model:glm}, the following conditions hold:
\begin{enumerate}
\item For all $i\in [M]$, the function $h_i$ satisfies $\|\E_{U\sim N(0, I_k)}[h_i(U)U]\| = O(1)$;
\item For all $u,v\in\R^k$ and $i\in [M]$, it holds that $|h_i(u) - h_i(v)| = O( \|u- v\|)$;
\item Errors $y_{ij}\mid x_{ij} - \E[y_{ij}\mid x_{ij}]$ are independent sub-gaussian with constant variance proxy.
\end{enumerate}
\end{assumption}

\section{Extended Experiments}
\label{app:experiments}

\subsection{Synthetic Data Setup}
\label{app:synthetic_data}
For our synthetic distributed data experiments, we generate $n_i$ samples $(x_{ij}, y_{ij})$ for each source $i \in [M]$ according to $y_{ij} = x_{ij}^\top B^\star\alpha_i^\star + \epsilon_{ij}$. Here, $B^\star \in \mathbb{R}^{d \times k}$ represents the shared subspace and $\alpha_i^\star \in \mathbb{R}^k$ are source-specific coefficients. By default, we set $M=100$, $d=30$, and $k=6$. 

The shared basis $B^\star$ is sampled from a Haar distribution, with features $x_{ij} \sim \mathcal{N}(0, \frac{1}{d}I)$ and noise $\epsilon_{ij} \sim \mathcal{N}(0, 1)$. To model a heterogeneous setup, source sample sizes $n_i$ are drawn i.i.d.\ from $[\frac{d}{3}, d]$. This regime ensures $n_i < d$, rendering purely local learning insufficient and necessitating robust data screening.

We consider two distinct regimes for the coefficients $\alpha_i$:
\begin{itemize}[label={$\bullet$}, topsep=0pt, itemsep=-0.3em, leftmargin=15pt] 
    \item \textbf{Clustered Coefficients:} Each source is assigned to a group $G_i \in \{1, 2\}$ with $P(G_i=1) = g$ (default $g=0.2$). If $G_i=1$, then $\alpha_i \sim \mathcal{N}(0, \text{diag}(\mathbf{1}_{k/2}, \mathbf{0}_{k/2}))$; otherwise, $\alpha_i \sim \mathcal{N}(0, \text{diag}(\mathbf{0}_{k/2}, \mathbf{1}_{k/2}))$. 
    \item \textbf{Heterogeneous Gaussian:} Coefficients are drawn as $\alpha_i \sim \mathcal{N}(0, \Psi_i)$, where $\Psi_i$ is a random PSD matrix. We generate each $\Psi_i$ by first sampling a random matrix $A_i$ with i.i.d.\ entries uniformly in $[0, 1)$, forming $\Psi_i = (A_i + {A_i}^T)/2 + 3I_{k}$, and normalizing its trace so that the average eigenvalue is one.
\end{itemize}

\subsection{Real-World Dataset Setup}
\label{app:real_world}
To assess practical applicability, we evaluate our method on the ACSIncome \cite{ding2021retiring} and CelebA \cite{liu2015faceattributes} datasets. These benchmarks provide natural data partitions that reflect real-world distribution shifts and allow us to test our method across regimes of varying complexity. For all tasks, we adopt \texttt{FedRep} \cite{collins2021exploiting} as the base estimator and report the final classification accuracy averaged over the entire source population. We run our algorithm with varying assumed values of dimension $k$.

\subsection{ACSIncome}
The ACSIncome dataset is used to predict whether an individual's annual income exceeds \$50,000. The dataset contains approximately 1.6 million samples with low-dimensional tabular features ($d=10$). We partition the data geographically by state, where each state represents a source, to simulate natural heterogeneity. 

Formally, for source $s$, let $x_i \in \mathbb{R}^{10}$ denote the raw tabular features of the $i$-th individual, $y_i \in \{0,1\}$ be the income label, and $\theta_s$ be the source-specific logistic regression parameter. This task is modeled using federated logistic regression.

\subsection{CelebA}
For high-dimensional data, we utilize the CelebA dataset for smile classification. Following the LEAF benchmark \cite{caldas2018leaf}, we select a 10\% subset of the population, resulting in approximately 20,000 samples partitioned by individual identity, where each identity represents a source. The inputs are $224 \times 224$ RGB images. 

Given the high dimensionality, we employ a Vision Transformer (ViT-Tiny) \cite{dosovitskiy2020image} pre-trained on ImageNet to extract a lower-dimension signal ($d=192$) from the images prior to subset selection. Specifically, we extract the activations from the penultimate layer corresponding to the `cls' token \cite{dosovitskiy2020image}. 

Formally, for source $s$, $x_i \in \mathbb{R}^{192}$ is the ViT-Tiny embedding of the $i$-th image, $y_i \in \{0,1\}$ is the smile label, and $\theta_s$ is the source-specific classification head. We then fine-tune the model with the selected sources.

\subsection{Nonlinear Feature Extractors}
\noindent{\bf Ablation on ACSIncome.} While our theoretical guarantees are developed for linear models, the source-screening procedure is agnostic to the downstream learner and can be applied as a preprocessing step prior to any training regime. To validate this, we conduct additional experiments on ACSIncome, replacing the logistic regression model with a two-layer neural network with ReLU activations as a nonlinear feature extractor and varying the hidden layer dimension ($d_h \in \{64, 256\}$). Source selection is performed using our screening algorithm before initiating \texttt{FedRep} training of the neural network. All other experimental details follow Section~\ref{sec:real_world_benchmarks}.

 \begin{table}[h!] \centering \begin{tabular}{lcc} \hline & \multicolumn{2}{c}{Mean Accuracy (\%)} \\ \cmidrule(lr){2-3} Selection Method & $d_h = 64$ & $d_h = 256$ \\ \hline Full Population & 79.91 & 79.49 \\ Random Selection & 78.08 & 77.87 \\ Power-of-Choice & 79.70 & 79.41 \\ Ours & \textbf{80.13} & \textbf{79.96} \\ \hline \end{tabular} \caption{Classification accuracy on ACSIncome with a two-layer neural network feature extractor at hidden dimensions $d_h \in \{64, 256\}$. Our source-screening procedure consistently outperforms all baselines, demonstrating that the benefits extend beyond the linear regime.} \label{tab:nonlinear_results} \end{table} 

As shown in Table~\ref{tab:nonlinear_results}, our screening procedure yields consistent accuracy gains over full client training and other subset selection baselines, even when training a nonlinear model. This suggests the benefits of source screening naturally transfer to more complex model classes. 

\subsection{Reweighting Clients Instead of Removal}
\label{appendix:reweighting}

A natural alternative to client subsampling is to reweight client contributions during distributed averaging rather than removing them entirely. We address this question both theoretically and empirically.

\paragraph{Theoretical perspective.}
Existing upper bounds do not readily extend to sample reweighting. To see this, consider the special class of problem instances in Section~\ref{sec: example}. Appropriate reweighting will easily make the reweighted diversity matrix $D$ well-conditioned. However, reweighting also directly impacts the behavior of subspace estimators. In particular, existing analysis of those estimators relies on the statistical concentration of all used samples as a whole, a structure that sample reweighting may undermine.

For the general collection of clients, departing from the results in Section~\ref{subsec: genie-aided efficient algorithm}, appropriate sample reweighting may not exist. The analysis in Section~\ref{subsec: genie-aided efficient algorithm} heavily relies on (i) column selection (Algorithm~\ref{alg: genie-aided}, lines 12--20) and (ii) the concatenation of the selected columns (Algorithm~\ref{alg: genie-aided}, line 29). Sample reweighting significantly affects the correctness of both steps. Roughly speaking, assuming matrix $A$ is standardized (i.e., all columns have $\ell_2$ norm equal to 1), Theorem~\ref{thm: classic BT} guarantees that, at each iteration, the outer product of the selected columns, taken as a whole, has a constant-level condition number. Sample reweighting undermines this column standardization structure, affecting the construction of the matrix $H$ (Appendix~\ref{app: proof of classic BT}). In particular, instead of the standard subtraction, one may need to subtract a diagonal matrix with entries closely related to the weights. Following the roadmap of existing analysis, the condition number of the weighted version of the relevant matrix may no longer remain constant. Moreover, unless a more refined version of Weyl's inequalities is available, the accumulation of weights (analogous to matrix concatenation in the original analysis) may fail to preserve a constant condition number.

\paragraph{Empirical comparison.}
Numerically, sample reweighting may help reduce noise compared with pure removal. We train a two-layer neural network on the ACS Income Dataset and evaluate reweighting as an alternative to client subsampling. Specifically, we compute the Grothendieck Factorization of the diversity matrix (i.e., line 19 of our empirical subpopulation search, applied to all clients) and use the inverse of the client $d_j$ values to weight their contribution during distributed averaging with FedRep. We consider two paradigms: \textbf{V1}, which only considers $d_j$ with weights $\frac{d_j^{-1}}{\sum_{j'} d_{j'}^{-1}}$, and \textbf{V2}, which also incorporates client data volume $m_j$ (as in standard federated averaging) with weights $\frac{m_j d_j^{-1}}{\sum_{j'} m_{j'} d_{j'}^{-1}}$. As shown in Table~\ref{tab:reweighting}, while reweighting strategies may outperform full population training with lower variance across seeds, they achieve lower average accuracy than source subsampling.

\begin{table}[h]
    \centering
    \caption{Comparison of client selection strategies on the ACS Income Dataset (two-layer neural network). }
    \label{tab:reweighting}
    \begin{tabular}{lcc}
        \toprule
        Selection Method & Mean Accuracy & Std. Accuracy \\
        \midrule
        Subsampling   & 79.96 & 0.21 \\
        Reweight V1   & 79.38 & 0.03 \\
        Reweight V2   & 79.58 & 0.06 \\
        Full Population & 79.49 & 0.06 \\
        \bottomrule
    \end{tabular}
\end{table}

\section{Empirical Algorithm}
\label{app: empirical alg}

\begin{algorithm}[H]
\caption{Empirical Subpopulation Search  }
\label{alg: empirical}
\begin{algorithmic}[1]
\STATE \textbf{Input:} Matrices of split local averaging $\bar{Z} = [\bar{z}_1, \cdots, \bar{z}_M]$ and $\tilde{Z} = [\tilde{z}_1, \cdots, \tilde{z}_M]$ such that 
$\|\bar{Z}\|^2 \lesssim \frac{M}{k}$ and $\|\tilde{Z}\|^2 \lesssim \frac{M}{k}$.
An absolute constant $c^*$ for which $320(c^*+ \sqrt{2c^*}) \le 0.5$. A target success rate $\delta \in (0,1)$;\;   
\STATE \textbf{Output:} A set of column indices $\calS\subseteq [M]$.\; 
\vspace{0.5em} 
\STATE $\hat A\gets \bar{Z} + \tilde{Z}$.
\STATE Compute $\text{st.rank}(\hat A)$. 
\IF{$c^*\cdot \text{st.rank}(\hat A) < 1$ }
\STATE ~ {\bf Return } $\emptyset$, and display {\em ``low stable rank''}; 
\ENDIF 
\STATE Let $s = \lceil c^*\cdot \text{st.rank}(\bar{Z} + \tilde{Z})\rceil$;   
\STATE Compute $\lambda_{\min}({ \hat A\hat A^{\top}})$;   
\STATE $A_1\gets {\hat A}$, and $\calS \gets \emptyset$;\; 
\STATE Compute $\|A_1\|^2$; \; 
\STATE $\tilde{c} \gets \frac{\|A_1\|^2}{M/k}$; \;   
\FOR{$t=1, \cdots, \lceil \lambda_{\min}({\hat A\hat A^{\top}})\rceil$}
\IF{$\text{st.rank}(A_t) \ge \frac{k}{2\tilde{c}}$}   
\FOR{$\ell =1, ..., \log_{8/7}\frac{\lambda_{\min}({\hat A\hat A^{\top}})}{\delta}$}  
\STATE Draw a uniformly random set $\tilde{\calS}_t$ with cardinality $s$;  
\STATE Compute the $H_{\tilde{\calS}_t\times \tilde{\calS}_t} = A^{\top}_{\tilde{\calS}_t} A_{\tilde{\calS}_t} - I_{s}$;     
\IF{$\|H_{\tilde{\calS}_t\times \tilde{\calS}_t}\|_{\infty \to 1} \le \frac{s}{8}$}
\STATE Perform Grothendieck Factorization on $H_{\tilde{\calS}_t\times \tilde{\calS}_t}$ to obtain $H_{\tilde{\calS}_t\times \tilde{\calS}_t} = D_t T_t D_t$;    
\STATE Let $\calS_t = \{j: d_{jt}^2 \le 2/s, j\in \tilde{\calS}_t \}$, where $d_{jt}$ is the $j$-th diagonal entry of $D_t$;   
\STATE {\bf Break}. ~~ \COMMENT{\small {\color{OliveGreen} break the inner for-loop and proceed to execute the remaining commands}}
\ENDIF  
\ENDFOR   
\STATE Remove columns in $\calS_t$ from the matrix $A_t$ to obtain $A_{t+1}$;  
\ELSE \STATE {\bf Break} \COMMENT{\small {\color{OliveGreen} break the outer for-loop and proceed to execute the remaining commands}}
\ENDIF   
\ENDFOR 

\STATE Set $t^* \gets t$; \COMMENT{\small {\color{OliveGreen} $t^*$ reads out the interation upon which the for-loop terminates.}}  
\IF{$\text{st.rank}(A_{t^*}) \ge \frac{k}{2\tilde{c}}$}
\STATE $\calS \gets \cup_{r=1}^{t^*} \calS_r$;  
\ELSE \STATE $\calS \gets \cup_{r=1}^{t^*-1} \calS_r$;  
\ENDIF 
\STATE {\bf Return} $\calS$. 

\end{algorithmic}
\end{algorithm}

\section{Auxiliary Lemmas and Theorems in Section \ref{sec: genie-aided matrix refinement}}
\label{app: auxiliary lemmas}

\begin{theorem}[Weyl’s inequalities]\citep[Theorem III.2.1]{bhatia2013matrix} 
\label{thm: Weyl}
Let $P$ and $Q$ by $d\times d$ real symmetric matrices. 
Let eigenvalues in nonincreasing order: $\lambda_1(P) \ge \cdots \ge \lambda_d(P)$. Similarly, define eigenvalues for $Q$ and $P+Q$. 
For all $i, j$ such that $i+j - 1 \le d$: 
\[
\lambda_{i+j-1}(P+Q) \le \lambda_i(P) + \lambda_j(Q).  
\]
For all $i, j$ such that $i+j - 1\ge d$: 
\[
\lambda_{i+j-d}(P+Q) \ge \lambda_i(P) + \lambda_j(Q). 
\]

\end{theorem}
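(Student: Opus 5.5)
We prove both inequalities with the Courant--Fischer variational characterization of eigenvalues and a dimension-counting argument. We obtain the second inequality from the first by negation.

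\textbf{Variational fact.} For a real symmetric $X\in\reals^{d\times d}$ with eigenvalues $\lambda_1(X)\ge\cdots\ge\lambda_d(X)$, we use
\[
\lambda_m(X)=\min_{\dim W=d-m+1}\ \max_{x\in W,\ \|x\|=1} x^\top X x .
\]
Only the easy direction is needed: for every subspace $W$ with $\dim W\ge d-m+1$, we have $\lambda_m(X)\le \max_{x\in W,\|x\|=1}x^\top Xx$. To see this, let $E$ be the span of the top $m$ eigenvectors of $X$. Then $\dim E+\dim W>d$, so $E\cap W$ contains a unit vector $x$. Expanding $x$ in the eigenbasis gives $x^\top Xx\ge\lambda_m(X)$.

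\textbf{First inequality.} Fix $i,j$ with $m:=i+j-1\le d$. Let $U$ be the span of the eigenvectors of $P$ for $\lambda_i(P),\dots,\lambda_d(P)$, so $\dim U=d-i+1$. Every unit $x\in U$ satisfies $x^\top Px\le\lambda_i(P)$, again by expanding in the eigenbasis. Define $V$ analogously for $Q$, with $\dim V=d-j+1$ and $x^\top Qx\le\lambda_j(Q)$ on unit vectors of $V$. Then
\[
\dim(U\cap V)\ge (d-i+1)+(d-j+1)-d=d-m+1 .
\]
For every unit $x\in U\cap V$,
\[
x^\top(P+Q)x\le\lambda_i(P)+\lambda_j(Q).
\]
Applying the variational fact to $X=P+Q$ with $W=U\cap V$ gives $\lambda_{i+j-1}(P+Q)\le\lambda_i(P)+\lambda_j(Q)$.

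\textbf{Second inequality.} We use the identity $\lambda_\ell(-X)=-\lambda_{d-\ell+1}(X)$. Suppose $i+j-1\ge d$, and set $i'=d-i+1$ and $j'=d-j+1$. Then
\[
i'+j'-1=2d-i-j+1\le d,
\]
so the first inequality applies to $-P$ and $-Q$:
\[
\lambda_{i'+j'-1}(-P-Q)\le\lambda_{i'}(-P)+\lambda_{j'}(-Q).
\]
Translating each term with the identity gives
\[
-\lambda_{d-(i'+j'-1)+1}(P+Q)\le-\lambda_i(P)-\lambda_j(Q).
\]
The index is $d-(2d-i-j+1)+1=i+j-d$. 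Hence $\lambda_{i+j-d}(P+Q)\ge\lambda_i(P)+\lambda_j(Q)$.

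The only delicate step is the dimension count. It guarantees a nontrivial intersection of the right dimension, and the rest is bookkeeping of indices in the negation step.
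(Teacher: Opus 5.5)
Your proof is correct. The paper gives no proof of its own and only cites Bhatia's \emph{Matrix Analysis}, Theorem III.2.1, whose standard argument is the one you give. There, the bottom eigenspaces of $P$ and $Q$ are intersected with a top eigenspace of $P+Q$ by a dimension count, Rayleigh quotients are bounded on the common unit vector, and the second inequality is obtained by applying the first to $-P$ and $-Q$; your two-step intersection ($U\cap V$, then $E$ inside the variational fact) repackages that three-subspace intersection, and your index bookkeeping in the negation step checks out.
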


\section{Proof of Theorem \ref{thm: minimax training} }
\begin{proof}
Since \(n_i=n\) for all \(i\in[M]\), we have \(N=Mn\), that is $n=\frac{N}{M}$. 
From Table \ref{tab:bounds_comparison}, we know that when $\lambda_k \ge \frac{1}{n}$, the minimax lower bound in estimating $B^\star$ is 
\[
\sqrt{\frac{d}{N\lambda_k}} + \sqrt{\frac{Md}{N^2\lambda_k^2}} \asymp \sqrt{\frac{d}{N\lambda_k}},
\]
recalling that $\lambda_k = \lambda_{\min}(D)$.   
From the analysis in \cite{niu2024collaborative}, we know that when training over the entire population, the error upper bound is 
\begin{align*}
\sqrt{\frac{d\lambda_1}{N\lambda_k^2}} + \sqrt{\frac{Md}{N^2\lambda_k^2}} \asymp \sqrt{\frac{d\lambda_1}{N\lambda_k^2}}.      
\end{align*}
Let's consider collaboratively estimating the subspace using the split local averaging algorithm in \cite{niu2024collaborative}. Let $N^{\prime} = n |\calS|$, $D^{\prime} = \frac{1}{|\calS|} \sum_{i\in \calS} \alpha_i^\star(\alpha_i^\star)^{\top}$. Let $\lambda_1^{\prime} = \lambda_{\max}(D^{\prime})$ and $\lambda_k^{\prime} = \lambda_{\min}(D^{\prime})$. Since we know $\calS$ satisfies Definition \ref{def: good subpopulation}, then $\lambda_1^{\prime} \asymp \lambda_k^{\prime} \asymp \frac{1}{k}$, and 
\begin{align*}
&N^{\prime} = n|\calS| \asymp nk\lambda_{\min}(AA^{\top}) = nkM\cdot \lambda_{\min}\pth{\frac{1}{M} AA^{\top}} \\ &= nkM\cdot \lambda_{\min}\pth{D} = nkM \lambda_k. 
\end{align*}
Thus, we have 
\begin{align}
\sqrt{\frac{d\lambda^{\prime}_1}{N^{\prime}(\lambda^{\prime}_k)^2}} 
\asymp \sqrt{\frac{d}{ N^{\prime}\lambda^{\prime}_k }}
\asymp \sqrt{\frac{d}{nkM \lambda_k \frac{1}{k} }}  
= \sqrt{\frac{d}{N\lambda_k}},  
\end{align}
matching the lower bound. Hence, the achieved rate is minimax optimal. 
    
\end{proof}

\section{Proof of Theorem \ref{thm: existence of desired subpopulation}}

\begin{proof}

We first show that $\lambda_{\min} (AA^\top) $ cannot exceed $M/k$.  \[\text{Tr}(AA^\top) = \text{Tr}(\sum_{i=1}^M \alpha_i^{\star}(\alpha_i^{\star})^\top) = M = \sum_{i=1}^k \lambda_i(AA^T).\] Then, we have $\lambda_{\min} (AA^\top) \leq \frac{M}{k}  $.

Then, we consider two cases: 
(1) $\lambda_{\min}(AA^{\top}) = \Theta(M/k)$, and   
(2)  $\lambda_{\min}(AA^{\top}) = o(M/k)$. 

In the first case,  the original matrix $A$ is well-conditioned, and we can choose $\calS = [M]$.
Specifically,  
\begin{align*}
\kappa\pth{AA^{\top}} = \frac{\|AA^{\top}\|}{\lambda_{\min}(AA^{\top})} = \frac{C M/k}{c M/k} = \Theta(1),   
\end{align*}
where $C>0, c>0$ are two absolute constants. Throughout this paper, the same notation $C$ and $c$ may mean different specific values, yet they do not scale with key parameters such as $M$, $k$, and $\lambda_{\min}(AA^{\top})$.  
In addition, $|\calS| = M = \Theta(k\cdot\lambda_{\min}(AA^{\top}))$.

Henceforth, we focus on proving case 2, i.e., $\lambda_{\min}(AA^{\top}) = o(M/k)$. 
We construct a conceptual algorithm (Algorithm \ref{alg: existence}) to establish the existence of the desired subpopulation. Since our goal is purely existential, neither the computational complexity nor the practical implementability of Algorithm \ref{alg: existence} plays any role in the proof.

\begin{algorithm}[H]
\caption{Existence of a Good Subpopulation}
\label{alg: existence}
\begin{algorithmic}[1]
\STATE \textbf{Input:} A full-row-rank standardized matrix $A = [\alpha_1^*, \alpha_2^*, \cdots, \alpha_M^*]$, where $\norm{\alpha_i^*}=1$ for $i=1, \cdots, M$, and $\|A\|^2 \lesssim \frac{M}{k}$\;   
\STATE \textbf{Output:} A set of column indices $\calS\subseteq [M]$.\; 
\vspace{0.5em}

\STATE $A_1\gets A$, and $\calS \gets \emptyset$;\; 
\STATE Compute $\|A_1\|^2$; \; 
\STATE $\tilde{c} \gets \frac{\|A_1\|^2}{M/k}$; \; 
\FOR{$t=1, \cdots, \lceil \lambda_{\min}(AA^{\top})\rceil$}
\IF{$\text{st.rank}(A_t) \ge \frac{k}{2\tilde{c}}$} 
\STATE Find $\calS_t$, the subset of columns of matrix $A_t$ promised by Theorem \ref{thm: classic BT};   
\STATE Remove columns whose indices are in the subset $\calS_t$ from the matrix $A_t$ to obtain $A_{t+1}$;  
\ELSE  \STATE ~~ {\bf Break} \COMMENT{\small {\color{OliveGreen} break out the for-loop and proceed to execute the remaining commands}}
\ENDIF   
\ENDFOR 
\STATE Set $t^* \gets t$; \COMMENT{\small {\color{OliveGreen} $t^*$ reads out the iteration upon which the for-loop terminates.}}  
\IF{$\text{st.rank}(A_{t^*}) \ge \frac{k}{2\tilde{c}}$}
\STATE $\calS \gets \cup_{r=1}^{t^*} \calS_r$;  
\ELSE  \STATE $\calS \gets \cup_{r=1}^{t^*-1} \calS_r$;  
\ENDIF 
\STATE {\bf Return} $\calS$. 
\end{algorithmic}
\end{algorithm}

Note that the if-clause in the for-loop is executed at least once because $\text{st.rank}(A_1) = \frac{k}{\tilde{c}}$ as per the definition of $\tilde{c}$. Hence, $\calS \not=\emptyset$. 

For any set $\calS^{\prime}\subseteq [M]$, let $A_{\calS^{\prime}}$ denote the submatrix of $A$ with columns restricted to the set $\calS^{\prime}$. 
For any non-overlapping subsets $\calS^{\prime}$ and $\calS^{\prime\prime}$, from the Weyl's inequalities (Theorem \ref{thm: Weyl}), we know that 
\begin{align*}
\lambda_{1}\pth{A_{\calS^{\prime}}A_{\calS^{\prime}}^{\top} + A_{\calS^{\prime \prime}}A_{\calS^{\prime \prime}}^{\top}} &\le  \lambda_{1}\pth{A_{\calS^{\prime}}A_{\calS^{\prime}}^{\top}} + \lambda_{1}\pth{A_{\calS^{\prime \prime}}A_{\calS^{\prime \prime}}^{\top}}, \\
\lambda_{k}\pth{A_{\calS^{\prime}}A_{\calS^{\prime}}^{\top} + A_{\calS^{\prime \prime}}A_{S^{\prime \prime}}^{\top}} &\ge  \lambda_{k}\pth{A_{\calS^{\prime}}A_{\calS^{\prime}}^{\top}} + \lambda_{k}\pth{A_{\calS^{\prime \prime}}A_{\calS^{\prime \prime}}^{\top}}.  
\end{align*}
When $\lambda_{k}\pth{A_{\calS^{\prime}}A_{\calS^{\prime}}^{\top}}>0$ and $\lambda_{k}\pth{A_{S^{\prime \prime}}A_{\calS^{\prime \prime}}^{\top}}>0$, we have 
\begin{align}
\label{eq: condition union}
\nonumber 
\kappa\pth{A_{\calS^{\prime}}A_{\calS^{\prime}}^{\top} + A_{\calS^{\prime \prime}}A_{\calS^{\prime \prime}}^{\top}}
&= \frac{\lambda_1\pth{A_{\calS^{\prime}}A_{\calS^{\prime}}^{\top} + A_{\calS^{\prime \prime}}A_{\calS^{\prime \prime}}^{\top}}}{\lambda_k\pth{A_{\calS^{\prime}}A_{\calS^{\prime}}^{\top} + A_{\calS^{\prime \prime}}A_{\calS^{\prime \prime}}^{\top}}}  \\
\nonumber  
& \le \frac{\lambda_{1}\pth{A_{\calS^{\prime}}A_{\calS^{\prime}}^{\top}} + \lambda_{1}\pth{A_{\calS^{\prime \prime}}A_{\calS^{\prime \prime}}^{\top}}}{\lambda_{k}\pth{A_{\calS^{\prime}}A_{\calS^{\prime}}^{\top}} + \lambda_{k}\pth{A_{\calS^{\prime \prime}}A_{\calS^{\prime \prime}}^{\top}}} \\
\nonumber  
& =\frac{ \kappa_{\calS^{\prime}} \lambda_{k}\pth{A_{\calS^{\prime}}A_{\calS^{\prime}}^{\top}} +  \kappa_{\calS^{\prime\prime}}\lambda_{k}\pth{A_{S^{\prime \prime}}A_{\calS^{\prime \prime}}^{\top}}}{\lambda_{k}\pth{A_{\calS^{\prime}}A_{\calS^{\prime}}^{\top}} + \lambda_{k}\pth{A_{\calS^{\prime \prime}}A_{S^{\prime \prime}}^{\top}}}\\
& \le \max\{\kappa_{\calS^{\prime}}, \kappa_{\calS^{\prime\prime}}\},
\end{align}
where $\kappa_{\calS^{\prime}}$ and $\kappa_{\calS^{\prime\prime}}$ are the condition numbers of matrices $A_{\calS^{\prime}}A_{\calS^{\prime}}^{\top}$ and $A_{\calS^{\prime \prime}}A_{\calS^{\prime \prime}}^{\top}$, respectively.  

Without loss of generality, assume that $\calS = \cup_{r=1}^{t^*} \calS_r$; the other case can be shown analogously. We have 
\begin{align*}
1\le \kappa \pth{\sum_{i\in \calS} \alpha_i^\star (\alpha_i^\star)^{\top}} 
= \kappa \pth{\sum_{i\in \cup_{r=1}^{t^*}\calS_r} A_{\calS_r}(A_{\calS_r})^{\top}} 
\overset{(a)}{\le} \max_{r\in \{1, \dots, t^*\}}\kappa\pth{A_{\calS_r}(A_{\calS_r})^{\top}}
\le 3, 
\end{align*}
where inequality (a) holds by repeatedly applying the arguments in \eqref{eq: condition union}, and the last inequality follows from Theorem \ref{thm: classic BT}.

To complete the proof of case 2, it remains to show $|\calS| = \Omega\pth{k\cdot \lambda_{\min}(AA^{\top})}$. When $\lambda_{\min}(AA^{\top}) = O(1)$, we have 
\begin{align*}
 |\calS| \ge |\calS_1| \overset{(a)}{=} \Theta(k) = \Omega(k\cdot \lambda_{\min}(AA^{\top})),   
\end{align*}
where equality (a) is ensured by Theorem \ref{thm: classic BT}. 
When $\lambda_{\min}(AA^{\top}) = \omega(1)$, we need to show that $t^* = \Theta(\lambda_{\min}(AA^{\top}))$. 
To see this, we know from Theorem \ref{thm: classic BT} that at each $t\le t^*$, we obtain $\calS_t$ of size $\Theta(k)$ independently of $M$.  
Then, $|\calS| = t^* \Theta(k) = O (k\lambda_{\min}(AA^{\top})) =o(M)$ regardless of the choice of $t^*$, which implies $|\calS| \le \frac{1}{2}M$ when $M$ is sufficiently large. 
Since $A_tA_t^{\top}$ is positive semi-definite, it holds that $\|A_t\|^2 \le \|A\|^2$ for each $t = 1, \cdots, t^*$. 
Hence, for each $t=1, \cdots, \lceil \lambda_{\min}(AA^{\top})\rceil$, we have 
\begin{align*}
\frac{\|A_t\|_F^2}{\|A_t\|^2}  \ge \frac{\|A_t\|_F^2}{\|A\|^2} 
= \frac{M- |\cup_{r=1}^{t-1} \calS_r|}{\|A\|^2} 
\ge \frac{M- |\calS|}{\|A\|^2} 
\ge \frac{M/2}{\tilde{c} M/k} 
= \frac{k}{2\tilde{c}}. 
\end{align*}
By the termination criterion of the for-loop in Algorithm \ref{alg: existence}, we know it will not terminate before round $\lceil \lambda_{\min}(AA^{\top})\rceil$. 
Thus, $t^* = \lceil \lambda_{\min}(AA^{\top})\rceil$, resulting in 
\[
|\calS| = \Theta\pth{k \lambda_{\min}(AA^{\top})}. 
\]
 
\end{proof}

\section{Proof of Theorem \ref{thm: classic BT}}
\label{app: proof of classic BT}
Let $\delta\in [0,1]$. 
Let $P_{\delta}$ be a random $M\times M$ diagonal matrix where exactly $s = \lfloor \delta M\rfloor$ entries equal one and the rest equal zero.  With a little abuse of notation, we treat $AP_{\delta}$ as a random $s$-column submatrix of $A$ by ignoring the zeroed columns. 
Let $p, q\in [1, +\infty]$. The matrix norm $\|\cdot\|_{p\to q}$ is defined as 
\begin{align*}
\|A\|_{p\to q} = \max_{x\in \reals^M: ~ \|x\|_p=1} \|Ax\|_q.       
\end{align*} 
Let $R_{\delta}$ be a random $M\times M$ diagonal matrix whose diagonal entries are independent 0-1 Bernoulli random variables with common mean $\delta$. 

\begin{proposition}[\cite{tropp2009column}]
\label{prop: poissonaization}
For any $p, q\in [1, +\infty]$, and for any matrix $A$ with $M$ columns, it holds that 
\[
\mathbb{E} \|A P_{\delta}\|_{p\to q} \le 2 \mathbb{E} \|A R_{\delta}\|_{p\to q}. 
\]
For each $M\times M$ matrix $H$, it holds that 
\[
\mathbb{E} [\|P_{\delta}H P_{\delta}\|_{p\to q}] \le 2 \mathbb{E} \|R_{\delta} HR_{\delta}\|_{p\to q}.  
\]
\end{proposition}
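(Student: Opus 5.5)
We need to prove Proposition (Poissonization, Tropp 2009): $\mathbb{E}\|AP_\delta\|_{p\to q} \le 2\,\mathbb{E}\|AR_\delta\|_{p\to q}$, and the analogous bound for $P_\delta H P_\delta$.

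The plan is to compare the two random models by conditioning the Bernoulli selector $R_\delta$ on its cardinality. Let $J=\operatorname{tr}(R_\delta)\sim\mathrm{Bin}(M,\delta)$. Conditional on $J=j$, the support of $R_\delta$ is a uniformly random $j$-subset of $[M]$. Hence $\mathbb{E}[\|AR_\delta\|_{p\to q}\mid J=j]=F(j)$, where $F(j)$ is the expected norm of the submatrix formed by a uniformly random $j$-subset of columns. By definition $\mathbb{E}\|AP_\delta\|_{p\to q}=F(s)$ with $s=\lfloor\delta M\rfloor$.

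\textbf{Step 1: monotonicity of $F$.} Deleting columns (or, for $H$, deleting a row and the matching column) cannot increase $\|\cdot\|_{p\to q}$. For $\|AP\|$, restricting to a subset amounts to restricting $x$ to a coordinate subspace, which can only decrease the maximum. For $PHP$, the norm of a principal submatrix is at most that of the full matrix, since one restricts $x$ and projects the output, and coordinate projection is a contraction in every $\ell_q$. A uniform random $j$-subset can be generated by first drawing a uniform $(j+1)$-subset and then deleting a uniformly chosen element. Coupling the two draws this way shows that $F$ is nondecreasing in $j$.

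\textbf{Step 2: median bound.} Using monotonicity and nonnegativity,
\[
\mathbb{E}\|AR_\delta\|_{p\to q}=\sum_j \Pr(J=j)F(j)\ge \Pr(J\ge s)\,F(s).
\]
It remains to show $\Pr(J\ge \lfloor\delta M\rfloor)\ge 1/2$. This is the one nontrivial ingredient, and I expect it to be the main obstacle. It follows from the classical fact that the median of $\mathrm{Bin}(M,\delta)$ is either $\lfloor M\delta\rfloor$ or $\lceil M\delta\rceil$ (Kaas--Buhrman). From that fact, $\Pr(J\ge\lfloor M\delta\rfloor)\ge 1/2$.

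If a self-contained argument is preferred, I would try the standard bound $\Pr(J\ge \mathbb{E}J)\ge$ something. More directly, I would use that the mean $M\delta$ lies between the floor and ceiling of the median, which yields the inequality. Once this is in hand, $F(s)\le 2\,\mathbb{E}\|AR_\delta\|$.

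\textbf{Step 3: the $H$ case.} The second claim follows verbatim, replacing $AP$ by $PHP$ and using the principal-submatrix monotonicity from Step 1. Degenerate cases ($\delta=0$, giving $s=0$ and zero norms) are trivial.
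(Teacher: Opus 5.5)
Your argument is correct and is essentially the standard proof from Tropp's paper. The paper itself gives no proof and simply imports the result. Your steps match Tropp's: condition $R_\delta$ on its cardinality $J\sim\mathrm{Bin}(M,\delta)$, use the deletion coupling to show that the expected norm over a uniformly random $j$-subset is nondecreasing in $j$, and finish with $\Pr(J\ge\lfloor M\delta\rfloor)\ge 1/2$.

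The one blemish is your optional ``self-contained'' aside, which states the Kaas--Buhrman fact backwards. The correct statement is that every median lies in $[\lfloor M\delta\rfloor,\lceil M\delta\rceil]$, not that the mean lies between the floor and ceiling of the median. Drop that aside and keep the citation you gave first.
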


Let 
\[
\|H\|_{\text{col}} = \sum_{j=1}^M \|He_j\|_2  
\]
be the \textit{column norm} of $H$, where $\{e_j\}_{j=1}^M$ is the collection of standard basis of $\reals^M$. 
\begin{lemma}[\cite{rudelson2007sampling}] 
\label{thm: matrix norm reduction}    
Fix $\delta\in [0,1]$. Suppose that $H\in \reals^{M\times M}$. Then 
\[
\mathbb{E}\|R_{\delta}H R_{\delta}\|_{\infty\to 1} \le 20\qth{\delta^2 \|H-\text{diag}(H)\|_{\infty \to 1} + \delta^{3/2}\pth{\|H\|_{\text{col}} + \|H^*\|_{\text{col}}} + \delta \|\text{diag}(H)\|_{\infty\to 1}}. 
\]    
\end{lemma}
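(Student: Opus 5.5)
The plan is to split $H$ into its diagonal and off-diagonal parts, handle the diagonal part exactly, and control the off-diagonal part by decoupling followed by a one-sided sampling estimate applied twice. Write $H = \text{diag}(H) + G$ with $G := H-\text{diag}(H)$, so that
\[
\mathbb{E}\|R_{\delta}HR_{\delta}\|_{\infty\to 1}\le \mathbb{E}\|R_{\delta}\,\text{diag}(H)\,R_{\delta}\|_{\infty\to 1}+\mathbb{E}\|R_{\delta}GR_{\delta}\|_{\infty\to 1}.
\]

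\textbf{Diagonal part.} For a diagonal matrix the $\infty\to 1$ norm is the sum of the absolute values of its diagonal entries. Since $r_j^2=r_j$ for the selectors $r_j$ on the diagonal of $R_\delta$, we get $\mathbb{E}\|R_\delta\,\text{diag}(H)\,R_\delta\|_{\infty\to1}=\delta\sum_j|h_{jj}|=\delta\|\text{diag}(H)\|_{\infty\to 1}$. This gives the last term of the bound.

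\textbf{Decoupling.} Because $G$ has zero diagonal, I would invoke the standard decoupling inequality for chaos in independent selectors (Bourgain--Tzafriri):
\[
\mathbb{E}\|R_\delta G R_\delta\|_{\infty\to1}\le 4\,\mathbb{E}\|R_\delta G R'_\delta\|_{\infty\to1},
\]
where $R'_\delta$ is an independent copy of $R_\delta$. I will take this as a known tool. If a self-contained argument is wanted, it comes from splitting the index set by auxiliary i.i.d.\ fair coins and using convexity of the norm.

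\textbf{One-sided lemma.} The key step is the following claim: for any fixed matrix $X$,
\[
\mathbb{E}\|R_\delta X\|_{\infty\to1}\le \delta\|X\|_{\infty\to1}+2\sqrt{\delta}\,\|X^{*}\|_{\text{col}},
\]
together with its transposed version for $\mathbb{E}\|XR_\delta\|_{\infty\to1}$, which has $\|X\|_{\text{col}}$ in place of $\|X^*\|_{\text{col}}$. To prove it:
\begin{itemize}
\item Write $R_\delta=\delta I+(R_\delta-\delta I)$. The first piece contributes exactly $\delta\|X\|_{\infty\to1}$.
\item Symmetrize the second piece, bounding it by $2\,\mathbb{E}\|\mathcal{E}R_\delta X\|_{\infty\to1}$ with $\mathcal{E}$ a diagonal matrix of Rademacher signs.
\item Write $\|\mathcal{E}R_\delta X\|_{\infty\to 1}=\max_{x,y\in\{\pm1\}^M}\sum_j \varepsilon_j r_j y_j (Xx)_j$ and bound each row's contribution by $r_j\|X^*e_j\|_1$-type quantities. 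Then use Khintchine/Cauchy--Schwarz in the form $\mathbb{E}(\sum_j r_j a_j^2)^{1/2}\le\sqrt{\delta}(\sum_j a_j^2)^{1/2}$ to extract the factor $\sqrt\delta$ against row $\ell_2$ norms, which sum to $\|X^*\|_{\text{col}}$.
\end{itemize}
This step is where I expect the main difficulty. The naive bound loses a dimension factor, so the symmetrized maximum over sign vectors must be controlled through the column-norm (row-$\ell_2$) structure rather than entrywise. If the direct route fails, I would fall back on Rudelson--Vershynin's argument via the $\infty\to2$ norm and a Rademacher-process comparison.

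\textbf{Assembling.} Apply the one-sided lemma twice.
\begin{itemize}
\item Conditioning on $R'_\delta$ and applying the lemma in $R_\delta$ with $X=GR'_\delta$ gives
\[
\mathbb{E}_{R}\|R_\delta GR'_\delta\|_{\infty\to1}\le \delta\|GR'_\delta\|_{\infty\to1}+2\sqrt\delta\,\|R'_\delta G^*\|_{\text{col}}.
\]
\item Take expectation over $R'_\delta$. The transposed lemma gives $\mathbb{E}\|GR'_\delta\|_{\infty\to1}\le\delta\|G\|_{\infty\to1}+2\sqrt\delta\|G\|_{\text{col}}$.
\item Since $\|R'_\delta G^*\|_{\text{col}}\le\|G^*\|_{\text{col}}$ trivially, and $\mathbb{E}\|\cdot\|_{\text{col}}$ is linear in the selected columns, the second term is at most $2\sqrt\delta\,\|G^*\|_{\text{col}}$ up to the appropriate $\delta$ factor from the sampled columns.
\end{itemize}
Collecting terms yields $\delta^2\|G\|_{\infty\to1}+O(\delta^{3/2})(\|G\|_{\text{col}}+\|G^*\|_{\text{col}})$. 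Deleting the diagonal can only decrease column norms, so $\|G\|_{\text{col}}\le\|H\|_{\text{col}}$ and $\|G^*\|_{\text{col}}\le\|H^*\|_{\text{col}}$. Multiplying by the decoupling constant $4$ and adding the diagonal term gives the stated bound with an absolute constant. Tracking constants ($4\times(1+2+2)$) should land at or below $20$.
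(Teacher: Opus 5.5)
The paper gives no proof of this lemma; it quotes it from Rudelson--Vershynin, with the constants as in Tropp. So the benchmark is the standard argument from those sources. Your architecture is exactly that argument: split off the diagonal, decouple with an independent copy $R'_\delta$, and apply a one-sided sampling bound twice. Your diagonal term is fine. The decoupling step with constant $4$ is also fine, since passing to a submatrix cannot increase $\|\cdot\|_{\infty\to1}$.

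The gap is the one-sided lemma, which is both mis-proved and mis-stated. Centering at the level of the matrix norm loses everything, because $\|(R_\delta-\delta I)X\|_{\infty\to1}=\max_x\sum_j|r_j-\delta|\,|(Xx)_j|\ge(1-\delta)\|R_\delta X\|_{\infty\to1}$. Your symmetrization is also vacuous. As your own formula shows, the signs $\varepsilon_j$ are absorbed into $y_j$, so $\|\mathcal{E}R_\delta X\|_{\infty\to1}=\|R_\delta X\|_{\infty\to1}$ and the resulting inequality is circular. The centering has to be done on the scalar process: $\|R_\delta X\|_{\infty\to1}=\max_{x\in\{\pm1\}^M}\sum_j r_j|(Xx)_j|\le\delta\|X\|_{\infty\to1}+\max_x\sum_j(r_j-\delta)|(Xx)_j|$. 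Symmetrizing this gives $2\,\mathbb{E}\max_x\sum_j\varepsilon_j r_j|(Xx)_j|$, where the signs survive because they multiply nonnegative numbers. The missing ingredient is the Ledoux--Talagrand contraction principle, applied conditionally on $R_\delta$, to remove the absolute values. That yields $2\,\mathbb{E}\|X^*R_\delta\varepsilon\|_1\le 2\sqrt\delta\sum_k\|Xe_k\|_2$.

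So the correct one-sided bound is $\mathbb{E}\|R_\delta X\|_{\infty\to1}\le\delta\|X\|_{\infty\to1}+2\sqrt\delta\,\|X\|_{\text{col}}$. Row sampling pairs with column norms, and column sampling $XR_\delta$ pairs with $\|X^*\|_{\text{col}}$, which is the reverse of what you wrote. Your version is false. Let $X$ consist of $p$ rows of an $M\times M$ Hadamard matrix, with the other rows zero, and take $\delta=1/p$. Its rows are orthogonal of norm $\sqrt M$, so $\|X\|_{\infty\to1}\le\sqrt p\,M$ and $\|X^*\|_{\text{col}}=p\sqrt M$. Then $\mathbb{E}\|R_\delta X\|_{\infty\to1}\ge(1-e^{-1})M$, whereas $\delta\|X\|_{\infty\to1}+2\sqrt\delta\|X^*\|_{\text{col}}\le M/\sqrt p+2\sqrt{pM}=o(M)$ when $1\ll p\ll M$.

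The swap is also why your assembly does not close. With your pairing the last term is $2\sqrt\delta\,\mathbb{E}\|R'_\delta G^*\|_{\text{col}}$. Since $R'_\delta$ deletes entries \emph{inside} each column of $G^*$, Jensen only gives a factor $\sqrt\delta$. The result is a term $2\delta\|G^*\|_{\text{col}}$ of the wrong order, and the ``linearity in the selected columns'' you appeal to does not apply. With the correct pairing the outer term is $\|GR'_\delta\|_{\text{col}}=\sum_k r'_k\|Ge_k\|_2$, whose expectation is exactly $\delta\|G\|_{\text{col}}$. The inner term is $\mathbb{E}\|GR'_\delta\|_{\infty\to1}\le\delta\|G\|_{\infty\to1}+2\sqrt\delta\|G^*\|_{\text{col}}$. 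After decoupling this gives $4\delta^2\|G\|_{\infty\to1}+8\delta^{3/2}(\|G\|_{\text{col}}+\|G^*\|_{\text{col}})+\delta\|\text{diag}(H)\|_{\infty\to1}$, which is within the stated constant $20$.
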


\begin{lemma}\citep{pisier1986factorization} 
\label{thm: grothendieck factorization}
Each matrix $G$ can be factorized as $G = D_1 T D_2$ such that 
\begin{itemize}
\item $D_i$ is a non-negative, diagonal matrix with $\text{trace}(D_i^2) =1$ for $i=1, 2$, and 
\item $\|G\|_{\infty \to 1} \le \|T\| \le 2 \|G\|_{\infty \to 1}$.    
\end{itemize} 
When $G$ is Hermitian, we can take $D_1 = D_2$.     
\end{lemma}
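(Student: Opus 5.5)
The plan is to derive the factorization from Grothendieck's inequality combined with semidefinite-programming duality; this is the standard Pietsch-type argument. Write $G\in\reals^{m\times n}$. First, recall that $\|G\|_{\infty\to 1}=\max_{x\in\{\pm1\}^m,\,y\in\{\pm1\}^n} x^{\top}Gy$. Next, consider its vector relaxation
\[
\mathrm{SDP}(G)=\sup\Big\{\textstyle\sum_{i,j}G_{ij}\langle u_i,v_j\rangle:\ \|u_i\|=\|v_j\|=1\Big\}.
\]
Grothendieck's inequality gives $\|G\|_{\infty\to1}\le \mathrm{SDP}(G)\le K_G\|G\|_{\infty\to1}$. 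In the real case $K_G<1.79<2$, which yields the constant $2$ in the statement. I take this inequality as the single black-box input.

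The second step is to dualize $\mathrm{SDP}(G)$. Its primal is the maximization of $\tfrac12\langle \begin{bmatrix}0&G\\ G^{\top}&0\end{bmatrix},X\rangle$ over PSD $X$ with unit diagonal. This primal is strictly feasible (take $X=I$), so strong duality holds. The dual is
\[
\min\Big\{\tfrac12\big(\textstyle\sum_i a_i+\sum_j b_j\big):\ \begin{bmatrix}\mathrm{diag}(a)&-G/2\\ -G^{\top}/2&\mathrm{diag}(b)\end{bmatrix}\succeq 0\Big\}.
\]
Replacing $(a,b)$ by $(ta,b/t)$ preserves feasibility, since it is a congruence by $\mathrm{diag}(\sqrt t I,I/\sqrt t)$. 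Hence, at an optimum (attained by compactness after normalization), we may balance the two sums so that $\sum_i a_i=\sum_j b_j=\mathrm{SDP}(G)=:\sigma$. Positive semidefiniteness of the block matrix means that for all $x,y$,
\[
|x^{\top}Gy|\le \Big(\textstyle\sum_i a_ix_i^2\Big)^{1/2}\Big(\textstyle\sum_j b_jy_j^2\Big)^{1/2}.
\]
This follows from testing the block matrix against $(x,\pm y)$ with rescaling and then applying AM--GM.

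The third step builds the factorization. Set $p=a/\sigma$ and $q=b/\sigma$, which are probability vectors, and let $D_1=\mathrm{diag}(\sqrt p)$ and $D_2=\mathrm{diag}(\sqrt q)$, so that $\mathrm{trace}(D_i^2)=1$. Define $T=D_1^{+}GD_2^{+}$ using pseudo-inverses. The displayed inequality forces $G_{ij}=0$ whenever $p_i=0$ or $q_j=0$, so $G=D_1TD_2$ exactly. Taking $x=D_1^{+}u$ and $y=D_2^{+}w$ in the inequality gives $|u^{\top}Tw|\le\sigma\|u\|\|w\|$. Hence $\|T\|\le \sigma\le K_G\|G\|_{\infty\to1}\le 2\|G\|_{\infty\to1}$.

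The lower bound is immediate. For sign vectors $x,y$,
\[
x^{\top}Gy=(D_1x)^{\top}T(D_2y)\le \|T\|\,\|D_1x\|\,\|D_2y\|=\|T\|,
\]
because $\|D_1x\|^2=\sum_ip_i=1$ and likewise $\|D_2y\|^2=1$. For Hermitian $G$, the SDP is invariant under swapping the two blocks. Averaging a dual optimum $(a,b)$ with its swap $(b,a)$ gives a feasible, still optimal point with $a=b$, and therefore $D_1=D_2$.

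I expect the main obstacle to be the duality step: justifying attainment of the dual minimum and deriving the bilinear bound cleanly from the PSD block condition, including the degenerate zero-weight coordinates. I would handle the degenerate coordinates via the Schur-complement observation above, namely that a zero diagonal entry of a PSD matrix forces its entire row to vanish.
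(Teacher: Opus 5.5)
The paper gives no proof of this lemma. It is quoted from \cite{pisier1986factorization} and used as a black box in the proof of Theorem~\ref{thm: classic BT} and in the factorization step of Algorithm~\ref{alg: genie-aided}. So you are supplying an argument rather than reproducing one, and the argument is sound. Grothendieck's inequality for the vector relaxation, SDP duality to extract nonnegative weights $a,b$, the congruence rescaling $(ta,b/t)$ to balance them, and the pseudo-inverse definition of $T$ together give $\|G\|_{\infty\to1}\le\|T\|\le\mathrm{SDP}(G)\le K_G\|G\|_{\infty\to1}$. Your swap-and-average step correctly yields $D_1=D_2$ for symmetric $G$, which is exactly the case the paper needs ($A_{\mathcal{S}_0}^\top A_{\mathcal{S}_0}-I$). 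Compared with the bare citation, your route shows the constant is really $K_G<1.79$. More usefully, it shows the factorization is the optimal solution of a semidefinite program, so the ``Grothendieck factorization'' step of the algorithm can be computed to any accuracy in polynomial time, which the paper never addresses. The citation's only advantage is generality (complex scalars, general operators), which the paper does not use.

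Two repairs are needed.
\begin{itemize}
\item \textbf{Dual normalization.} With objective $\tfrac12(\sum_i a_i+\sum_j b_j)$, the off-diagonal blocks must be $-G$ and $-G^\top$, not $-G/2$ and $-G^\top/2$. As written, for $G=[1]$ the point $a=b=\tfrac12$ is feasible, so the dual value is $\tfrac12\mathrm{SDP}(G)$. In that version the PSD condition only gives $|x^\top Gy|\le 2\bigl(\sum_i a_ix_i^2\bigr)^{1/2}\bigl(\sum_j b_jy_j^2\bigr)^{1/2}$. The two factor-of-two errors cancel, so your final bound $\|T\|\le\sigma$ is unaffected. After the one-symbol correction, your balanced sums equal $\sigma$ and your bilinear inequality holds exactly as stated.
\item \textbf{The case $G=0$.} Here $\sigma=0$, so the normalization $p=a/\sigma$ should be used only for $G\neq0$, where $\sigma\ge\|G\|_{\infty\to1}>0$. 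For $G=0$, take $T=0$ and any nonnegative diagonal $D_i$ with $\mathrm{trace}(D_i^2)=1$.
\end{itemize}
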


\subsection{Proof of Theorem \ref{thm: classic BT}}
With the above notation and auxiliary results in place, we are ready to prove Theorem \ref{thm: classic BT}. The proof presented here largely follows the analysis of \cite{tropp2009column}, but with all multiplicative constants made explicit. This explicit treatment is necessary to highlight a minor caveat that was previously overlooked and to underscore the need to impose the condition $\text{st.rank}(A) = \omega(1)$ with respect to $k$ and $M$.  
The original statement and proof in \cite{bourgain1987invertibility} may not require this condition; however, their argument is rooted in a functional-analytic perspective and does not provide algorithmic insight.

Let $H = A^{\top} A - I_M$, where $I_M\in \reals^{M\times M}$ is the identity matrix. Since each column of $A$ has $\ell_2$ norm 1, the diagonal entries of $A^{\top} A$ are all ones. Thus, $\text{diag}(H) = \bf{0}$. Applying Lemma \ref{thm: matrix norm reduction}, we know that for any $\delta\in (0, 1)$, 
\[
\mathbb{E}\|R_{\delta}H R_{\delta}\|_{\infty\to 1} \le 20\qth{\delta^2 \|H\|_{\infty \to 1} + 2\delta^{3/2}\|H\|_{\text{col}}}. 
\]  

Recall that by definition, $\|H\|_{\infty \to 1} = \max_{x\in \reals^M: ~ \|x\|_\infty=1} \|Hx\|_1.$ Furthermore, we have, for any $x\in \bbR^M$, 
\begin{align*}
    \|Hx\|_1 \leq \sqrt{M}\|Hx\| \leq \sqrt{M} \|H\|\|x\|\leq \sqrt{M} \|H\| (\sqrt{M} \|x\|_\infty) = M\|H\| \|x\|_\infty.
\end{align*}
Then, 
\begin{align}
\label{eqn: H inf,1 norm to spectral norm}
    \|H\|_{\infty\to 1} =\max_{x\in \reals^M: ~ \|x\|_\infty=1} \|Hx\|_1\leq \max_{x\in \reals^M: ~ \|x\|_\infty=1} M\|H\| \|x\|_\infty = M\|H\|.
\end{align}
Therefore, by \eqref{eqn: H inf,1 norm to spectral norm},
\begin{align}
\label{eqn: upper bound H inf to 1 by ||A||}
    \|H\|_{\infty\to 1} \le M \|H\| \le M \max\sth{\|A\|^2-1,1}\le M \|A\|^2.
\end{align}
Meanwhile, we have  
\begin{align}
\label{eqn: H col norm}
\|H\|_{\text{col}} \overset{(a)}{<} \|A^{\top}A\|_{\text{col}}= \sum_{j=1}^M \|A^{\top}a_j\|_2 \le M \|A\|,
\end{align}
where $a_j$ is the $j$-th column of $A$.
(a) holds because removing the diagonal entries will only decrease the column norm. The last inequality holds because $A$ is standardized, meaning that $\|a_j\|=1$.
Then, by \eqref{eqn: upper bound H inf to 1 by ||A||} and \eqref{eqn: H col norm}, we have 
\begin{align}
\label{eqn: RHR upper bound}
\mathbb{E}\|R_{\delta}H R_{\delta}\|_{\infty\to 1} 
&\le 20\qth{\delta^2 \|H\|_{\infty \to 1} + 2\delta^{3/2}\|H\|_{\text{col}}}  \nonumber\\
& \le 20 \qth{\delta^2 M \|A\|^2  + 2\delta^{3/2}M \|A\|}.  
\end{align}

Let $ \delta M  = |\calS| = \lceil c\cdot \text{st.rank}(A)\rceil$. Then, the upper bound \eqref{eqn: RHR upper bound} can be written as,
\begin{align*}
    \mathbb{E}\|R_{\delta}H R_{\delta}\|_{\infty\to 1} \leq&  20 |\calS|\qth{\delta \|A\|^2  + 2\delta^{1/2} \|A\|}.
\end{align*}
It turns out that being able to control the absolute constant $c$ in the upper bound of $\mathbb{E}\|R_{\delta}H R_{\delta}\|_{\infty\to 1}$ is crucial in guaranteeing that the condition number of the obtained subset is $\Theta(1)$. 
If $\text{st.rank}(A) = \Theta(1)$, $|\calS| = \lceil c\cdot \text{st.rank}(A)\rceil = \Theta(c)$, however, we then require $c$ being small such that \eqref{eqn: A_S^TA_S-I_S bound} holds. Therefore, we are not always able to control the absolute constant by adjusting $c$ without violating the requirement that $|\calS| \ge 1$. 
This explains why requiring $\text{st.rank}(A) = \omega(1)$ w.r.t.\,$k$ and $M$. 

When $k$ and $M$ are sufficiently large so that $\text{st.rank}(A)$ is sufficiently large, we have 
\begin{align*}
c\cdot \text{st.rank}(A) \le  \delta M = |\calS| 
&\le c\cdot \text{st.rank}(A) + 1 
\le 2 c\cdot \text{st.rank}(A) \\
& = \frac{2c \|A\|_F^2}{\|A\|^2}
= \frac{2cM}{\|A\|^2}, 
\end{align*}
i.e., $\delta \le \frac{2c}{\|A\|^2}$. 
So, 
\begin{align}
\label{eq: infinity norm bound}
\mathbb{E}\|R_{\delta}H R_{\delta}\|_{\infty\to 1}  
\le 20 \delta M \pth{\delta \|A\|^2 + 2 \delta^{1/2}\|A\|} 
\le 20 \delta M \pth{2c + 2\sqrt{2c}}.   
\end{align} 
By Proposition \ref{prop: poissonaization}, we have 
\begin{align}
\label{eq: expectation infinity to 1 norm}
\mathbb{E}\|P_{\delta}H P_{\delta}\|_{\infty\to 1} \le 2 \mathbb{E}\|R_{\delta}H R_{\delta}\|_{\infty\to 1}
\le 40 \delta M \pth{2c + 2\sqrt{2c}}. 
\end{align}    
Then, there must exist one realization of $P_\delta$, i.e., one subset $\calS_0$, such that 
\begin{align*}
\left \|A_{\calS_0}^{\top}A_{\calS_0} - I_{|\calS_0|} \right\|_{\infty\to 1}  
\le 40 \delta M \pth{2c + 2\sqrt{2c}}.    
\end{align*}
By Lemma \ref{thm: grothendieck factorization}, applying Grothendieck factorization to $\pth{A_{\calS_0}^{\top}A_{\calS_0} - I_{|\calS_0|}}$, a Hermitian matrix, we have 
\begin{align*}
A_{\calS_0}^{\top}A_{\calS_0} - I_{|\calS_0|}  
= D T D,
\end{align*}
where $D$ is a non-negative, diagonal matrix with $\text{trace}(D^2)=1$, and $\|T\| \le 2 \|A_{\calS_0}^{\top}A_{\calS_0} - I_{|\calS_0|}\|_{\infty\to 1}$. 
Define $\calS \subseteq \calS_0$ as 
\[
\calS = \sth{i: d_{ii}^2 \le \frac{2}{|\calS_0|}, i\in \calS_0}.    
\]
Note that it must be true that $|\calS|\ge \frac{|\calS_0|}{2}$. Otherwise, $1\ge \sum_{i\in \calS_0\setminus \calS} d_{ii}^2 >\frac{2}{|\calS_0|} \frac{|\calS_0|}{2} = 1$, a contradiction. 
Let $\hat D_{\calS}\in \reals^{|\calS_0|\times |\calS_0|}$ be a diagonal matrix that retains the original diagonal entries indexed by $\calS$ and sets the remaining diagonal entries to zero. It is not hard to see $\|\hat D_{\calS}\|^2\leq\frac{2}{|\calS_0|}$.
Though $A^{\top}_{\calS}A_{\calS} - I_{|\calS|}\not=\hat D_{\calS} T \hat D_{\calS}$, they are equal in terms of spectral norm:   
\begin{align*}
\|A^{\top}_{\calS}A_{\calS} - I_{|\calS|}\| 
&= \|\hat D_{\calS} T \hat D_{\calS}\| \\
&\le \|T\| \|\hat D_{\calS}\|^2 \\
& \le \frac{2}{|\calS_0|} 2 \|A_{\calS_0}^{\top}A_{\calS_0} - I_{|\calS_0|}\|_{\infty\to 1} \\
& \le  \frac{2}{|\calS_0|} 2 \cdot 40 |\calS_0| (2c+2\sqrt{2c}) \\
& = 320\pth{c+\sqrt{2c}}. 
\end{align*}
Since $c$ is determined by the choice of the target size $|\calS|$, we can partially control its value to make the above upper bound small. However, a valid choice of $c$ must ensure that the resulting target size satisfies $\delta M \ge 1$ and that $c\cdot \text{st.rank}(A) \ge 1$, which in turn requires $k$ and $M$ to be sufficiently large.
In particular, we will choose $c$ so that 
\begin{align}
\label{eqn: A_S^TA_S-I_S bound}
\|A^{\top}_{\calS}A_{\calS} - I_{|\calS|}\| \le 0.5.     
\end{align}
It can be checked easily that the eigenvalues of $A^{\top}_{\calS}A_{\calS}$ lie between 0.5 and 1.5. Hence, $\kappa\pth{A^{\top}_{\calS}A_{\calS}}\le 3$. 

\section{Proof of Theorem \ref{thm: algorithm performance}}

\begin{proof}
By \eqref{eq: expectation infinity to 1 norm}, we know that 
\begin{align*}
\mathbb{E}\left \|A_{\tilde{\calS}_t}^{\top}A_{\tilde{\calS}_t} - I_{s} \right\|_{\infty\to 1}   
\le ~ 40 s \pth{2c^* + 2\sqrt{2c^*}} 
\le \frac{s}{8}. 
\end{align*}
By Markov's inequality, we know   
\begin{align*}
\prob{\left \|A_{\tilde{\calS}_t}^{\top}A_{\tilde{\calS}_t} - I_{s} \right\|_{\infty\to 1} \ge \frac{s}{7} }
\le \frac{\mathbb{E}\left \|A_{\tilde{\calS}_t}^{\top}A_{\tilde{\calS}_t} - I_{s} \right\|_{\infty\to 1}}{s/7}
= \frac{7}{8}. 
\end{align*}
Thus, with probability at least $(1-\delta)$, we will find a subset of columns so that 
\[
\left \|A_{\tilde{\calS}_t}^{\top}A_{\tilde{\calS}_t} - I_{s} \right\|_{\infty\to 1} \ge \frac{s}{7}, ~~~ \forall ~ t = 1, \cdots, \lceil \lambda_{\min}(AA^{\top})\rceil.  
\]
From the analysis of Theorem \ref{thm: existence of desired subpopulation} and Theorem \ref{thm: classic BT}, we know that for each $t = 1, \cdots, \lceil \lambda_{\min}(AA^{\top})\rceil$, 
\[
|\calS_t| = \Theta(k), ~~~ \text{and} ~~~ \kappa\pth{\sum_{i\in \calS_t}\alpha_i^*(\alpha_i^*)^{\top}} \le 3. 
\]
Following the same argument as in the proof of Theorem \ref{thm: existence of desired subpopulation}, we know that 
\begin{align*}
|\calS| = \Omega (k\lambda_{\min}(AA^{\top})), ~~~ \text{and} ~~~ \kappa\pth{\sum_{i\in \calS} \alpha_i^*(\alpha_i^*)^{\top}} = \Theta(1),     
\end{align*}
satisfying the conditions in Definition \ref{def: good subpopulation}. 

\end{proof}

\section{Other Supporting Results}
\subsection{Proof of stable rank equivalence}
\label{sec: stable rank equivalence}
\begin{proof}
    \begin{align}
\label{eqn: ||BA||_F = ||A||_F}
\|B^\star A\|_F^2 
&= \sum_{i=1}^M \norm{B^\star \alpha_i^\star}^2   
= \sum_{i=1}^M \iprod{B^\star \alpha_i^\star}{B^\star \alpha_i^\star} 
= \sum_{i=1}^M \text{trace}\pth{B^\star \alpha_i^\star (B^\star \alpha_i^\star)^{\top}} \nonumber\\
& = \sum_{i=1}^M \text{trace}\pth{\alpha_i^\star (\alpha_i^\star)^{\top}(B^\star)^{\top}B^\star} \nonumber\\
& =  \sum_{i=1}^M \text{trace}\pth{\alpha_i^\star (\alpha_i^\star)^{\top}} \nonumber\\
& = \sum_{i=1}^M \norm{\alpha_i^\star}^2 = M = \|A\|_F^2,  
\end{align}
and 
\begin{align*}
\|B^\star A\|^2 = \sup_{x\in \calS^{d-1}} x^{\top} B^\star A A^{\top}(B^\star)^{\top} x.     
\end{align*}
Recall that $B^\star\in \reals^{d\times k}$ with orthonormal columns. Thus, for each $\tilde{x}\in \calS^{k-1}$, there exists at least one $x\in \calS^{d-1}$ such that $(B^\star)^\top x = \tilde{x}$. Thus, 
\begin{align}
\label{eqn: ||BA||=||A||}
\|B^\star A\|^2 &= \sup_{x\in \calS^{d-1}} x^{\top} B^\star A A^{\top}(B^\star)^{\top} x \nonumber\\
& = \sup_{\tilde{x}\in \calS^{k-1}} {\tilde{x}}^{\top} A A^{\top} \tilde{x} \nonumber\\
& = \|A\|^2. 
\end{align}
Therefore, by \eqref{eqn: ||BA||_F = ||A||_F}, \eqref{eqn: ||BA||=||A||}, and the definition of stable rank in \eqref{eq: st rank}, 
\[
\text{st.rank}(B^\star A) = \text{st.rank}(A). 
\]

\end{proof}

\end{document}